\newtheorem{theorem}{Theorem}
\newtheorem{corollary}{Corollary}
\newtheorem{definition}{Definition}
\newtheorem*{remark}{Remark}
\newcommand{\prob}{\mathbf{P}}
\newcommand{\R}{\mathbf{R}}
\newcommand{\Rp}{\mathbf{R}_{+}}
\newcommand{\Rpp}{\mathbf{R}_{++}}
\newcommand{\N}{\mathbf{N}}
\newcommand{\tf}{f_{+}}
\newcommand{\cint}{\lambda^{\star}}
\newcommand{\tv}{\textrm{TV}}
\newcommand{\bheader}[1]{\vspace{0.5mm}\noindent\textbf{#1}}
\newcommand{\specialcell}[2][c]{%
  \begin{tabular}[#1]{@{}c@{}}#2\end{tabular}}
\title{UNIPoint: Universally Approximating Point Processes Intensities} 
\author {
        Alexander Soen,
        Alexander Mathews,
        Daniel Grixti-Cheng,
        Lexing Xie \\
}
\begin{document}

\maketitle

\begin{abstract}
Point processes are a useful mathematical tool for describing events over time, and so there are many recent approaches for representing and learning them. One notable open question is how to precisely describe the flexibility of point process models and whether there exists a general model that can represent {\em all} point processes. Our work bridges this gap. Focusing on the widely used event intensity function representation of point processes, we provide a proof that a class of learnable functions can universally approximate any valid intensity function. The proof connects the well known Stone-Weierstrass Theorem for function approximation, the uniform density of non-negative continuous functions using a transfer functions, the formulation of the parameters of a piece-wise continuous functions as a dynamic system, and a recurrent neural network implementation for capturing the dynamics. Using these insights, we design and implement UNIPoint, a novel neural point process model, using recurrent neural networks to parameterise sums of basis function upon each event. Evaluations on synthetic and real world datasets show that this simpler representation performs better than Hawkes process variants and more complex neural network-based approaches. We expect this result will provide a practical basis for selecting and tuning models, as well as furthering theoretical work on representational complexity and learnability.
\end{abstract}

\section{Introduction}

\begin{figure}
    \centering
    \includegraphics[width=0.45\textwidth]{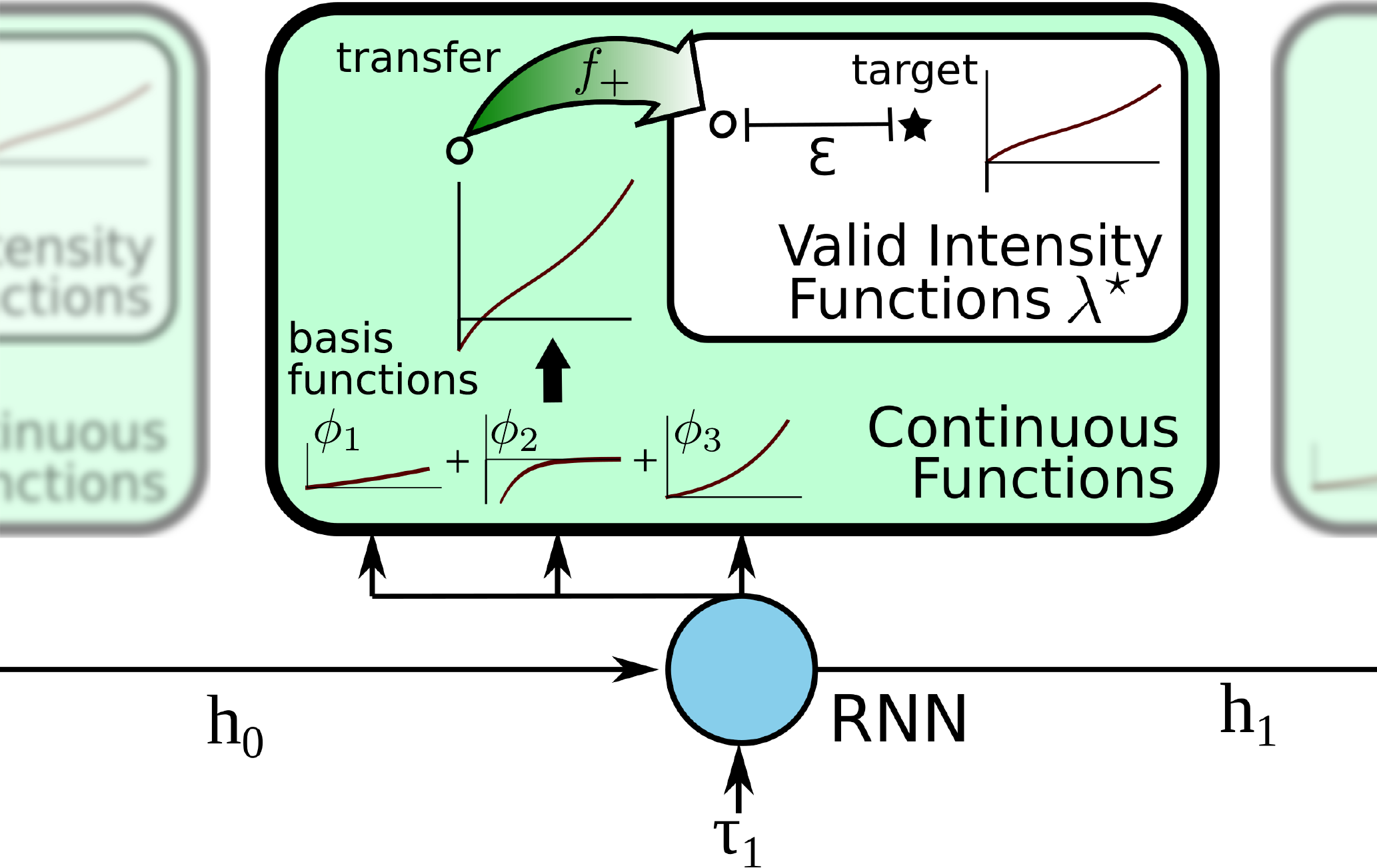}
    \caption{Overview of our method of universally approximating point processes. A RNN is used to parameterise a set of basis functions for each interarrival time \( \bm{\tau_{i}} \). Then, the sum of basis functions is used to approximate a continuous function, which is composed with a transfer function \( \bm{f_{+}} \) to universally approximate all valid intensity functions.}
    \label{fig:teaser}
\end{figure}

Temporal point processes \cite{daley2007introduction} are a preferred tool for describing events happening in irregular intervals, such as, earthquake modelling~\cite{ogata1988statistical}, social media~\cite{zhao2015seismic}, and finance~\cite{embrechts2011multivariate}. One common variant is the self-exciting Hawkes process with parametric kernel~\cite{laub2015hawkes}, which describes prior events triggering future events. However, misspecification of the kernel will likely result in poor performance~\cite{mishra2016feature}. One may ask what are the most flexible classes of point process intensity functions? How can they be implemented computationally? Does a flexible representation lead to good performance?

There is a body of literature surrounding these three questions. Multi-layer neural networks are well known for being flexible function approximators. They are able to approximate any Borel-measurable function on a compact domain~\cite{cybenko1989approximation,hornik1989multilayer}. A number of neural architectures have been proposed for point processes. The Recurrent Marked Temporal Point Process model (RMTPP)~\cite{du2016recurrent} uses Recurrent Neural Networks (RNN) to encode event history, and defines the conditional intensity function by a parametric form. Common choices of such parametric forms include an exponential function~\cite{du2016recurrent,upadhyay2018deep} or a constant function~\cite{li2018learning,huang2019recurrent}. Variants of the RNN have been explored, including NeuralHawkes~\cite{mei2017neural} that makes the RNN state a functions over time; as well as Transformer Hawkes~\cite{zuo2020transformer} and Self-attention Hawkes~\cite{zhang2019self} which uses attention mechanisms instead of recurrent units. However, a conceptual gap on the flexibility of the neural point process representation still remains. Piece-wise exponential functions~\cite{du2016recurrent,upadhyay2018deep} only encode intensities that are monotonic between events. The functional RNN representation~\cite{mei2017neural} is flexible but uses many more parameters. Transformers~\cite{zuo2020transformer,zhang2019self} are generic sequence-to-sequence function approximators~\cite{yun2020transformers}, but the functional form of the Transformer Hawkes point process intensity function is not an universal approximator.
Furthermore, intensity functions are non-negative and discontinuous at event times which means neural network approximation results cannot be applied directly. 

Recent results shed light on alternative point process representations. \citet{omi2019fully} uses a positive weight monotone neural network to learn the compensator (the integral of the intensity function). 
Although it is a generic approximator for compensators, it might assign non-zero probability to invalid inter-arrival times as the compensator can be non-zero at time zero. \cite{shchur2020intensityfree} represents inter-arrival times using normalising flow and mixture models, which can universally approximate any density. However, by defining the point process with the event density, the model cannot account for event sequences which stop naturally (see Section~\ref{sec:uaeventseq}). These approaches are promising alternatives but are not a full replacement for intensity functions, which are preferred since they are intuitive and can be superimposed. 

In this work, we propose {\em a class of neural networks that can approximate any point process intensity function to arbitrary accuracy}, along with a proof showing the role of three key constituents: a set of uniformly dense basis functions, a positive transfer function, and an approximator for arbitrary dynamic systems. We implement this proposal using RNNs, the output of which is used to parameterise a set of basis functions upon arrival of each event, as shown in Figure~\ref{fig:teaser}. Named {\em UNIPoint}, the proposed model performs well across synthetic and real world datasets in comparison to the Hawkes process and other neural variants. This work provides a general yet parsimonious representation for temporal point processes, and so forms a solid basis for future development in point process representations that incorporate rich contextual information into event models.

Our primary contributions are:

\begin{itemize}
    \item A novel architecture that can approximate any point process intensity function to arbitrary accuracy. 
    \item A theoretical guarantee for the flexible point process representation that builds upon the theory of universally approximating continuous functions and dynamic systems.
    \item UNIPoint --- the neural network implementation of the proposed architecture with strong empirical results on both synthetic and real world datasets. Reference code is available online\footnote{\url{https://github.com/alexandersoen/unipoint}}.
\end{itemize}

\subsection{Notation}

\( C(X, Y) \) denotes the class of continuous functions mapping from domain \( X \) to range \( Y \). Denote \( \R \) as the set of real numbers, \( \Rp \) as the non-negative reals and \( \Rpp \) as the strictly positive reals. Define the composition of a function \( f \) and a class of functions \( \mathcal{F} \) as \( f \circ \mathcal{F} = \{ f \circ g : g \in \mathcal{F} \} \). The sigmoid function \([1 + \exp(-x)]^{-1} \) is denoted as \( \sigma(x) \).

\section{Preliminary: Temporal Point Processes}
\label{sec:tpp_background}

A temporal point process is an ordered set of event times \( \{ t_{i} \}_{i=0}^{N} \). We typically describe a point process by its conditional intensity function \( \lambda(t \mid \mathcal{H}_{t^{-}}) \) which can be interpreted as the instantaneous probability of an event occurring at time \( t \) given event history \( \mathcal{H}_{t^{-}} \), consisting of the set of all events before time \( t \). This can be written as~\cite{daley2007introduction}:
\begin{equation}
    \label{eq:intensityeventrate}
    \lambda(t \mid \mathcal{H}_{t^{-}}) \doteq \lim_{h \downarrow 0^{+}} \frac{\prob(N[t, t+h) > 0 \mid \mathcal{H}_{t^{-}})}{h},
\end{equation}
where \( N[t_1, t_2) \) is the number of events occurring between two arbitrary times \(t_1 < t_2\). Equation~\ref{eq:intensityeventrate} restricts the conditional intensity function to non-negative functions. Given history ${\cal H}_{t^-}$, the conditional intensity is a deterministic function of time \( t \). Following standard convention, we refer to the conditional intensity function as simply the intensity function, abbreviating \( \lambda(t \mid \mathcal{H}_{t^{-}}) \) to \( \cint(t) \).

Point processes can be specified by choosing a functional form for the intensity function. For example, the Hawkes process, one of the simplest interacting point process~\cite{bacry2015hawkes}, can be defined as follows:
\begin{equation}
    \label{eq:hawkesintensity}
    \cint(t) = \mu + \sum_{t_{i} < t} \varphi(t - t_{i}),
\end{equation}
where \( \mu \) specifies the background intensity and \( \varphi(t - t_{i}) \) is the triggering kernel which characterises the self-exciting effects of prior events \( t_{i} \).

The likelihood of a point process is~\cite{daley2007introduction}
\begin{equation}
    \label{eq:likelihood}
    L = \left[ \prod_{i=1}^{N} \lambda^{\star}(t_{i}) \right] \exp \left(- \int_{0}^{T} \lambda^{\star}(s) \, ds \right),
\end{equation}
where the negated term in the exponential is known as the compensator function \( \Lambda^{\star}(t) = \int_{0}^{T} \lambda^{\star}(s) \, ds \).

\section{Universal Approximation of Intensities}
\label{sec:ppua}
To represent the influence of past events on future events, point process intensity functions \( \cint(t) \) are often continuous between events \( (t_{i-1}, \, t_{i} ] \); with discontinuities only possible at events. For example, the intensity function of the Hawkes process has discontinuities at each event,  Eq.~(\ref{eq:hawkesintensity}). Intuitively, this piece-wise continuous characterisation of the intensity function encodes the belief that the process only significantly changes its behaviour when new information (an event) is observed. As such, there are two behaviours of a point process we need to approximate: (1) the continuous intensity function segment between consecutive events, given a fixed event history; and (2) the change in the point process intensity function when an event occurs, so that we can approximate the jump dynamics between events.

We consider an intensity function \( \cint(t) \) with fixed observation period \( (0, T] \). The intensity function can be segmented by the event times of an event sequence \( (t_{0}, t_{1}], (t_{1}, t_{2}], \ldots, (t_{N-1}, t_{N}], (t_{N}, t_{N+1}] \), where \( t_{N+1} = T \). Given a piece-wise continuous intensity function, the segmented intensity function is continuous: \( u_{i}(\tau) = \cint(t) \) for \( t \in (t_{i-1}, t_{i}] \), where \( \tau = t - t_{i-1} \in (0, t_i-t_{i-1}] \). Thus to approximate the intensity function between consecutive events, we learn a function \( \hat{u}(\tau; p_{i}) \), parameterised by \( p_{i} \), to approximate {\em any} of the segmented intensity functions \( u_{i}(\tau) \), where each segment only differs in parameterisation. Then to approximate the jump dynamics of the intensity function we utilise the RNN approximation of a dynamic system, which dictates how the parameters \( p_{i} \) change over time.

To quantify the quality of an approximation, we use the uniform metric between two functions \( f, g : X \rightarrow \R \),
\begin{equation}
    \label{eq:uniformmetric}
    d(f, g) = \sup_{x \in X} \vert f(x) - g(x) \vert.
\end{equation}
This metric is the maximum difference of the two functions over a shared (compact) domain \( X \). The uniform metric has been used to prove universal approximation properties for neural networks~\cite{hornik1989multilayer,debao1993degree} and RNNs~\cite{schafer2007recurrent}. Given classes of functions \( \mathcal{F} \) and \( \mathcal{G} \), \( \mathcal{F} \) is a universal approximator of \( \mathcal{G} \) if for any \( \varepsilon > 0 \) and \( g \in \mathcal{G} \), there exists an \( f \in \mathcal{F} \) such that \( d(f, g) < \varepsilon \). An equivalently expression is: \( \mathcal{F} \) is uniformly dense in \( \mathcal{G} \).

\begin{remark}
\normalfont
Although we refer to Hawkes point processes as the primary example of a point process to approximate throughout the paper, following the example of~\citep{mei2017neural}, we note that as long as the point process has continuous intensity function between events, our approximation analysis will hold. Thus, in addition to Hawkes processes, the methods proposed in our work can approximate
point processes including self correcting processes and nonhomogenous Poisson processes with continuous densities.
\end{remark}

\subsection{Approximation Between Two Events}
\label{sec:uatwoevents}

To approximate the time shifted non-negative functions $u_i(\tau)$, we first introduce transfer functions \( \tf \) (Definition~\ref{def:transferfunction}). We then prove that the class of composed function \( \tf \circ \mathcal{F} \) preserves uniform density (Theorem~\ref{thm:transfer}). Given this theorem, we provide a method for constructing uniformly dense classes with sums of basis functions \( \Sigma(\phi) \) (Definition~\ref{def:basissum}) which are in turn uniformly dense after composing with $\tf$ (Corollary~\ref{cor:basisstoneua}). We further provide a set of suitable basis functions (Table~\ref{tab:validbasis}).

Formally, we define the \textit{M-transfer functions} which maps negative outputs of a function to positive values.

\begin{definition}
    \label{def:transferfunction}
    A function \( \tf : \R \to \Rp \) is a \textit{M-transfer function} if it satisfies the following:
    \begin{enumerate}
        \item \( \tf \) is M-Lipschitz continuous;
        \item \( \Rpp \subseteq \tf[\R] \);
        \item And \( \tf \) is strictly increasing on \( \tf^{-1}[\Rpp] \).
    \end{enumerate}
\end{definition}

Definition~\ref{def:transferfunction} provides a wide range of functions. In practice, it is convenient to use softplus function \( f_{\textrm{SP}}(x) = \log(1 + \exp(x)) \) which is a \( 1 \)-transfer function --- commonly used in other neural point processes~\cite{mei2017neural,omi2019fully,zuo2020transformer}. Alternatively, \( \tf(x) = \max(0, x) \) could be used; however, this is not differentiable at \( x = 0 \) which can cause issues in practice. Intuitively, M-transfer function are increasing functions which map to all positive values and have bounded steepness.

When a Gaussian process is used to define an inhomogenous Poisson process, the link functions serve a similar role to ensure valid intensity functions~\cite{lloyd2015variational}. However, many of these link function violate the conditions of being a M-transfer function~\cite{donner2018efficient}, i.e., the exponential link function \( \tf(x) = \exp(x) \) and squared link function \( \tf(x) = x^{2} \) are not M-Lipschitz continuous as they have unbounded derivatives; whereas the sigmoid link function \( \tf(x) = \sigma(x) \) is a bounded function (violating condition 2).

Using \( M \)-transfer functions, we can show that a uniformly dense class of unbounded functions will be uniformly dense for strictly positive functions under composition. These functions are defined with domain \( K \subset \R \), a compact subset, which can be set as \( K = [0, T] \) for intensity functions.

\begin{theorem}
    \label{thm:transfer}
    Given a class of functions \( \mathcal{F} \) which is uniformly dense in \( 
    C(K, \R) \) and a \( M \)-transfer function \( \tf \), the composed class 
    of functions \( \tf \circ \mathcal{F} \) is uniformly dense in \( C(K, 
    \Rpp) \) for any compact subset \( K \subset \R \).
\end{theorem}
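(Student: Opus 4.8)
The plan is to fix a target function $g \in C(K, \Rpp)$ and a tolerance $\varepsilon > 0$, and to produce an approximant of the form $\tf \circ f$ with $f \in \mathcal{F}$. Since $K$ is compact and $g$ is continuous and strictly positive, $g$ attains a positive minimum $m = \min_{x \in K} g(x) > 0$ and a finite maximum $M_g = \max_{x \in K} g(x)$; so the image $g[K]$ lies in a compact subinterval $[m, M_g] \subset \Rpp$. The key structural idea is to work with a right inverse of $\tf$ on $\Rpp$: by conditions (2) and (3) of Definition~\ref{def:transferfunction}, $\tf$ restricted to $\tf^{-1}[\Rpp]$ is a strictly increasing surjection onto $\Rpp$, so it has a well-defined (continuous, strictly increasing) inverse $\tf^{-1} : \Rpp \to \R$. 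Define $h = \tf^{-1} \circ g$, which is a continuous real-valued function on $K$, i.e. $h \in C(K, \R)$.

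Next I would invoke the hypothesis that $\mathcal{F}$ is uniformly dense in $C(K, \R)$: there exists $f \in \mathcal{F}$ with $d(f, h) = \sup_{x \in K} |f(x) - h(x)| < \delta$, where $\delta$ is to be chosen. Now estimate $d(\tf \circ f, g)$. For each $x \in K$, since $g(x) = \tf(h(x))$ (using $\tf \circ \tf^{-1} = \mathrm{id}$ on $\Rpp$), the $M$-Lipschitz property of $\tf$ gives
\begin{equation*}
  |\tf(f(x)) - g(x)| = |\tf(f(x)) - \tf(h(x))| \le M \, |f(x) - h(x)| < M \delta.
\end{equation*}
Choosing $\delta = \varepsilon / M$ yields $d(\tf \circ f, g) < \varepsilon$, and since $\tf \circ f \in \tf \circ \mathcal{F}$ by definition, this establishes uniform density. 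Note the output $\tf \circ f$ is automatically $\Rp$-valued (indeed the relevant comparisons happen in $\Rpp$ where things are well behaved), consistent with the codomain claim; it is fine that $\tf \circ f$ itself need not be strictly positive everywhere since density in $C(K,\Rpp)$ only requires approximants drawn from $\tf \circ \mathcal{F}$, which the theorem statement already fixes.

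The main obstacle — and the place to be careful — is the construction and continuity of the inverse $\tf^{-1}$ on $\Rpp$. Condition (3) only asserts strict monotonicity of $\tf$ on the preimage $\tf^{-1}[\Rpp]$, so one must check that this preimage is an interval (equivalently, that $\tf$ does not "dip" below and back above its positive values in a way that breaks injectivity there); this follows from $\tf$ being continuous, $M$-Lipschitz, and strictly increasing exactly on $\tf^{-1}[\Rpp]$, which forces $\tf^{-1}[\Rpp]$ to be a half-line or interval and $\tf^{-1}$ to be a genuine continuous inverse onto $\Rpp$. The rest is a routine $\varepsilon$–$\delta$ argument. A secondary subtlety worth a sentence is that we never need uniform continuity of $\tf^{-1}$ or compactness of $g[K]$ for the Lipschitz estimate itself — the Lipschitz bound on $\tf$ alone does all the work once $h$ is in hand — but compactness of $K$ and positivity of $g$ are what guarantee $h$ is a bona fide element of $C(K,\R)$ so that the density hypothesis on $\mathcal{F}$ can be applied.
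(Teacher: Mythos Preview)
Your proof is correct and follows essentially the same approach as the paper: construct the continuous inverse $\tf^{-1}$ on $\Rpp$, pull the target back to a function in $C(K,\R)$, approximate there to precision $\varepsilon/M$ using the density of $\mathcal{F}$, and push forward via the $M$-Lipschitz bound on $\tf$. Your added remarks on why $\tf^{-1}[\Rpp]$ is an interval and why $h \in C(K,\R)$ are a bit more careful than the paper's version, but the structure and key ideas are identical.
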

\begin{proof}
    Let \( f \in C(K, \Rpp) \) and \( \varepsilon > 0 \) be arbitrary.
    Since \( \tf \) is strictly increasing and continuous on the preimage of \( \Rpp \) then \( 
    \tf^{-1} \) exists, is continuous, and restricted to subdomain \( \Rpp \). 
    Thus, there exists some \( g \in C(K, \R) \) such that \( f = \tf \circ g 
    \).
    
    As \( \mathcal{F} \) is dense with respect to the uniform metric, for \( {\varepsilon}/{M} \) there exists some \( h \in \mathcal{F} \) such that \( d(h, g) < {\varepsilon}/{M} \). Thus for any \( x \in K \),
    \begin{align*}
        \vert (\tf \circ h)(x) - f(x) \vert
        &= \vert (\tf \circ h)(x) - (\tf \circ g) (x) \vert \\
        &\leq M \vert h(x) - g(x) \vert < \varepsilon.
    \end{align*}
    We have \( d(\tf \circ h, f) < \varepsilon \).
\end{proof}

To approximate \( u_{i}(\tau) \) using Theorem~\ref{thm:transfer} we need a family of functions which are able to approximate functions in \( C(K, \R) \). We consider the family of functions consisting of the sum of basis functions \( \phi(\cdot; \, p_{j}) \), where \( p_{j} \in \mathcal{P} \) denotes the parameterisation of the basis function \( \phi \).
\begin{definition}
    \label{def:basissum}
    Denote \( \Sigma(\phi) \)  as the class of functions corresponding to the sum of basis functions \( \phi : \R \times \mathcal{P} \to \R \), with parameter space \( \mathcal{P} \), as follows:
    \begin{equation*}
        \left \{ \hat{u} : \R \to \R \mid \hat{u}(x) = \sum_{j=1}^{J} \phi(x; \, p_{j}), \; p_{j} \in \mathcal{P}, \, J \in \N \right \}.
    \end{equation*}
\end{definition}

The parameter space \( \mathcal{P} \) of a basis function is determined by the parametric form of a chosen basis function \( \phi(x; \, p_{j}) \). For example, the class composed of exponential basis functions could be defined with parameter space \( \mathcal{P} = \R^{2} \) with functions \( \{ \phi : \R \rightarrow \R \mid \phi(x) = \alpha \exp(\beta x), \; \alpha, \beta \in \R \} \). Definition~\ref{def:basissum} encompasses a wide range of function classes, including neural networks with sigmoid~\cite{cybenko1989approximation,hornik1989multilayer,debao1993degree} or rectified linear unit activations~\cite{sonoda2017neural}.

The Stone-Weierstrass Theorem provides sufficient conditions for finding basis function for universal approximation.

\begin{theorem}[Stone-Weierstrass Theorem~\cite{rudin1964principles,royden1988real}]
    \label{thm:stoneweierstrass}
    Suppose a subalgebra \( \mathcal{A} \) of \( C(K, \, \R) \), where \( K \subset \R \) is a compact subset, satisfies the following conditions:
    \begin{enumerate}
        \item For all \( x, y \in K \), there exists some \( f \in \mathcal{A} \) such that \( f(x) \neq f(y) \);
        \item For all \( x_{0} \in K \), there exists \( f \in \mathcal{A} \) such that \( f(x_{0}) \neq 0 \).
    \end{enumerate}
    Then \( \mathcal{A} \) is uniformly dense in \( C(K, \, \R) \).
\end{theorem}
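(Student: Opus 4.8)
This is the classical Stone--Weierstrass theorem, so the plan is to run its standard lattice-based proof, specialised to a compact $K \subset \R$. Write $\mathcal{B}$ for the uniform closure of $\mathcal{A}$ inside $C(K,\R)$; since $\mathcal{B}$ is closed, it suffices to show $\mathcal{B} = C(K,\R)$. First I would verify that $\mathcal{B}$ is again a subalgebra: if $f_n \to f$ and $g_n \to g$ uniformly with $f_n, g_n \in \mathcal{A}$, then $f_n + g_n$, $\lambda f_n$ and $f_n g_n$ converge uniformly to $f + g$, $\lambda f$ and $fg$, the last using $\| f_n g_n - fg \| \le \| f_n \|\, \| g_n - g \| + \| g \|\, \| f_n - f \|$ together with boundedness of $\| f_n \|$; and $\mathcal{B}$ plainly inherits point separation and non-vanishing from $\mathcal{A}$.

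The technical heart is to show $\mathcal{B}$ is closed under pointwise $\max$ and $\min$. Via $\max(f,g) = \tfrac{1}{2}(f + g + |f - g|)$ and $\min(f,g) = \tfrac{1}{2}(f + g - |f - g|)$, this reduces to proving $f \in \mathcal{B} \Rightarrow |f| \in \mathcal{B}$. For this I would fix $c$ with $\| f \| \le c$ and approximate $t \mapsto |t|$ uniformly on $[-c,c]$ by polynomials vanishing at $0$ --- e.g.\ from the recursion $q_0 = 0$, $q_{n+1}(s) = q_n(s) + \tfrac{1}{2}\bigl(s - q_n(s)^2\bigr)$, which increases monotonically to $\sqrt{s}$ on $[0,1]$, hence uniformly by Dini's theorem, so that $P_n(t) := c\, q_n(t^2 / c^2)$ satisfies $P_n(0) = 0$ and $P_n(t) \to |t|$ uniformly on $[-c,c]$. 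Since $\mathcal{B}$ is an algebra, each $P_n(f) \in \mathcal{B}$, and $P_n(f) \to |f|$ uniformly, so $|f| \in \mathcal{B}$. I expect this to be the main obstacle: it is the one place where genuine real analysis enters, and the zero-constant-term requirement is essential because $\mathcal{A}$ need not contain the constant functions.

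Next I would establish two-point interpolation: for distinct $x_1, x_2 \in K$ and arbitrary reals $c_1, c_2$ there is $g \in \mathcal{A}$ with $g(x_i) = c_i$. The image of the linear map $\mathcal{A} \to \R^2$, $g \mapsto (g(x_1), g(x_2))$, is a subspace; condition 2 rules out its lying in a coordinate axis or being $\{0\}$, and if it were a line $\{(s, \alpha s) : s \in \R\}$ with $\alpha \ne 0$, then applying this relation to a point-separating $g$ (condition 1) and to $g^2 \in \mathcal{A}$ forces $\alpha^2 = \alpha$, hence $\alpha = 1$, contradicting $g(x_1) \ne g(x_2)$; so the image is all of $\R^2$.

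Finally I would assemble these via compactness. Fix $f \in C(K,\R)$ and $\varepsilon > 0$. For each pair $(p,q)$ choose $g_{p,q} \in \mathcal{B}$ with $g_{p,q}(p) = f(p)$ and $g_{p,q}(q) = f(q)$. Holding $p$ fixed, the sets $\{ g_{p,q} < f + \varepsilon \}$ are open and cover $K$; taking the pointwise minimum over a finite subcover gives $g_p \in \mathcal{B}$ with $g_p < f + \varepsilon$ on $K$ and $g_p(p) = f(p)$. Then the sets $\{ g_p > f - \varepsilon \}$ are open and cover $K$; taking the pointwise maximum over a finite subcover gives $h \in \mathcal{B}$ with $f - \varepsilon < h < f + \varepsilon$ on $K$, i.e.\ $\| h - f \| \le \varepsilon$. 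As $\varepsilon$ was arbitrary and $\mathcal{B}$ is closed, $f \in \mathcal{B}$.
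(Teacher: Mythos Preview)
Your proof plan is the standard lattice-based argument (essentially Rudin's) and is correct. The paper, however, does not prove this theorem at all: it simply states it with citations to \cite{rudin1964principles,royden1988real} and then invokes it to obtain Corollary~\ref{cor:basisstoneua}, so there is no ``paper's own proof'' to compare against --- your proposal goes well beyond what the paper supplies.
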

Thus, by using Theorem~\ref{thm:transfer} and the Stone-Weierstrass theorem, Theorem~\ref{thm:stoneweierstrass}, we arrive at Corollary~\ref{cor:basisstoneua}, which gives sufficient conditions for basis functions \( \phi \) to ensure that \( f_{+} \circ \Sigma(\phi) \) is a universal approximator for \( C(K, \Rpp) \).

\begin{corollary}
    \label{cor:basisstoneua}
    For any compact subset \( K \subset \R \) and for any \( M \)-transfer function \( \tf \), if a basis function \( \phi(\cdot \, ; \, p) \) parametrised by \( p \in \mathcal {P} \) satisfies the following conditions:
    \begin{enumerate}
        \item \( \sum(\phi) \) is closed under product;
        \item For any distinct points \( x, \, y \in K \), there exists some \( p \in \mathcal{P} \) such that \( \phi(x; \, p) \neq \phi(y; \, p) \);
        \item For all \( x_{0} \in K \), there exists some \( p \in \mathcal{P} \) such that \( \phi(x_{0}; \, p) \neq 0 \).
    \end{enumerate}
    Then \( \tf \circ \sum(\phi) \) is uniformly dense in \( C(K, \Rpp) \).
\end{corollary}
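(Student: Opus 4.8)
The plan is to reduce the statement to two results already in hand: the Stone--Weierstrass theorem (Theorem~\ref{thm:stoneweierstrass}) to get uniform density of \( \Sigma(\phi) \) in \( C(K,\R) \), and then Theorem~\ref{thm:transfer}, applied with \( \mathcal{F}=\Sigma(\phi) \) and the given \( M \)-transfer function \( \tf \), to upgrade this to uniform density of \( \tf\circ\Sigma(\phi) \) in \( C(K,\Rpp) \). So essentially all the work sits in the first step; the composition step is then immediate.

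To invoke Stone--Weierstrass I would check its hypotheses for \( \mathcal{A}=\Sigma(\phi) \) one at a time. That \( \Sigma(\phi) \) is a subalgebra of \( C(K,\R) \): each summand \( \phi(\cdot;p) \) is continuous on the compact set \( K \), so finite sums lie in \( C(K,\R) \); closure under addition is immediate from Definition~\ref{def:basissum} by concatenating index sets; and closure under multiplication is precisely hypothesis~1. For the separation hypothesis, given distinct \( x,y\in K \), hypothesis~2 supplies \( p \) with \( \phi(x;p)\neq\phi(y;p) \), and the one-term sum \( \phi(\cdot;p) \) (\( J=1 \)) belongs to \( \Sigma(\phi) \). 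For the non-vanishing hypothesis, given \( x_{0}\in K \), hypothesis~3 supplies \( p \) with \( \phi(x_{0};p)\neq 0 \), and again \( \phi(\cdot;p)\in\Sigma(\phi) \). Stone--Weierstrass then gives \( \Sigma(\phi) \) uniformly dense in \( C(K,\R) \), and Theorem~\ref{thm:transfer} closes the argument.

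The step I expect to need the most care is the subalgebra claim --- more precisely, closure of \( \Sigma(\phi) \) under scalar multiplication, which a subalgebra must satisfy but which Definition~\ref{def:basissum} does not encode directly, since its sums carry no coefficients. The point to spell out is that the admissible basis families, including every entry of Table~\ref{tab:validbasis}, carry an amplitude parameter, so that for each \( p\in\mathcal{P} \) and \( c\in\R \) there is \( p'\in\mathcal{P} \) with \( \phi(\cdot;p')=c\,\phi(\cdot;p) \); scaling every summand then shows \( \Sigma(\phi) \) is closed under scaling. This is genuinely needed and not a formality: dropping the amplitude, the unweighted sums \( \sum_{j} e^{\beta_{j} x} \) are all strictly positive and cannot approximate a negative function, so without it the conclusion would fail. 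A smaller bookkeeping point, worth stating once, is that compactness of \( K \) and continuity of each \( \phi(\cdot;p) \) are exactly the hypotheses under which both Stone--Weierstrass and Theorem~\ref{thm:transfer} apply.
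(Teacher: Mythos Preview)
Your proposal is correct and follows essentially the same route as the paper: verify that \( \Sigma(\phi) \) meets the hypotheses of Stone--Weierstrass (Theorem~\ref{thm:stoneweierstrass}) so that it is uniformly dense in \( C(K,\R) \), then apply Theorem~\ref{thm:transfer} to conclude. In fact you are more careful than the paper, which simply asserts that condition~1 makes \( \Sigma(\phi) \) a subalgebra; your observation that closure under scalar multiplication is not automatic from Definition~\ref{def:basissum} and must come from an amplitude parameter in \( \mathcal{P} \) is a genuine gap in the paper's presentation that you correctly identify and patch.
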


The first condition of Corollary~\ref{cor:basisstoneua} is given such that the set of basis functions \( \sum(\phi) \) is a subalgebra of \( C(X, \R) \).
The later two conditions are the required preconditions for the Stone-Weierstrass Theorem to hold.

Given the conditions of Corollary~\ref{cor:basisstoneua}, some interesting choices for valid basis functions \( \phi(x; \, p) \) are the exponential basis function \( \phi_{\textrm{EXP}}(x) = \alpha \exp (\beta x) \) and the power law basis function \( \phi_{\textrm{PL}}(x) = \alpha (1 + x)^{-\beta} \). These basis functions are similar to the exponential and power law Hawkes triggering kernels, which have seen widespread use in many domains~\cite{ogata1988statistical,bacry2015hawkes,laub2015hawkes,rizoiu2017expecting}.

We note that the class of intensity functions in Theorem~\ref{thm:transfer} and Corollary~\ref{cor:basisstoneua} are strictly positive continuous functions. However, these results generalise to non-negative continuous functions as our definition of intensity functions permits arbitrarily low intensity in \( u_{i}(\tau) \) --- where switching from arbitrarily low intensities to zero intensity results in arbitrarily low error with respect to the uniform metric on \( (0, T] \).

In Table~\ref{tab:validbasis}, we provide a selection of interesting basis functions to universally approximate \( u_{i}(\tau) \in C(K, \Rpp) \). One should note that Corollary~\ref{cor:basisstoneua} only provides sufficient conditions, where some of the basis function in Table~\ref{tab:validbasis} do not satisfy the precondition. For example, the sigmoid basis function \( \phi_{\textrm{SIG}}(x) = \alpha \sigma(\beta x + \delta), \; (\alpha, \beta, \delta) \in \R^{3} \) does not allow \( \Sigma(\phi_{\textrm{SIG}}) \) to be closed under product and thus does not satisfy the conditions of Corollary~\ref{cor:basisstoneua}. However, the sum of sigmoid basis functions is equivalent to the class of single hidden layer neural networks~\cite{hornik1989multilayer,debao1993degree}. Thus, in additional to an appropriate transfer function it does have the universal approximation property for non-negative continuous functions through Theorem~\ref{thm:transfer}.
Additionally, other basis functions used to define point process intensity functions can be used, such as radial basis functions~\cite{tabibian2017distilling} that are not generally closed under product but have universal approximation properties~\cite{park1991universal}.

\begin{table}[t]
    \centering
    \begin{tabular}{ccc}
        \toprule
        \specialcell{Basis\\Function}& \specialcell{Functional\\Form \( \phi \)} & \specialcell{Parameter\\Space \( \mathcal{P} \)} \\
        \midrule
        \midrule
        \( \phi_{\textrm{EXP}} \)\textsuperscript{\textdagger} & \( \alpha \exp(\beta x) \) & \( (\alpha, \beta) \in \R^{2} \) \\
        \( \phi_{\textrm{PL}} \)\textsuperscript{\textdagger} & \( \alpha(1 + x)^{-\beta} \) & \( (\alpha, \beta) \in \R \times \Rp \) \\
        \( \phi_{\textrm{COS}} \)\textsuperscript{\textdagger} & \( \alpha \cos(\beta x + \delta) \) & \( (\alpha, \beta, \delta) \in \R^{3}\) \\
        \( \phi_{\textrm{SIG}} \)\textsuperscript{\(\ddagger\)} & \( \alpha \sigma(\beta x + \delta) \) & \( (\alpha, \beta, \delta) \in \R^{3} \) \\
        \( \phi_{\textrm{ReLU}} \)\textsuperscript{\( \ast \)} & \( \max(0, \alpha x + \beta) \) & \( (\alpha, \beta) \in \R^{2} \) \\
        \bottomrule
    \end{tabular}
    \caption{Basis function universal approximators for intensity functions between two consecutive events. \textdagger~indicates functions that satisfy Corollary~\ref{cor:basisstoneua}; \( \ddagger \) one proven in~\protect\cite{cybenko1989approximation}; and \( \ast \) one proven in~\protect\cite{sonoda2017neural}.}
    \label{tab:validbasis}
\end{table}

\subsection{Approximation for Event Sequences}
\label{sec:uaeventseq}

The approximations to $u_i(\tau)$ use a set of parameters, e.g. $(\alpha, \beta, \delta)$ in Table~\ref{tab:validbasis}. We denote these parameters vectors as $p_i \in {\cal P}$, and the approximated function segment as $\hat u_i(\tau; p_i)$. Since each segment $\hat u_i(\tau; p_i)$ is uniquely determined by $p_i$, and the union of all segments approximates \( \cint(t) \), we would only need to capture the dynamics in $p_i$. 

We express $p_i$ as the output of a dynamic system.
\begin{align} \label{eq:dynamic_system}
    s_{i+1} &= g(s_{i}, t_{i}) \nonumber \\
    p_{i} &= \nu(s_{i}),
\end{align}
where \( s_{i+1} \) is the internal state of the dynamic system, \( g \) updates the internal state at each step, and \( \nu \) maps from the internal state to the output.

\begin{theorem}[RNN Universal Approximation~\cite{schafer2007recurrent}] \label{thm:ua_rnn}
Let \( g : \R^{J} \times \R^{I} \rightarrow \R^{J} \) be measurable and \( \nu : \R^{J} \rightarrow \R^{n} \) be continuous, the external inputs \( x_{i} \in \R^{I} \), the inner states \( s_{i} \in \R^{J} \), and the outputs \( p_{i} \in \R \) (for \( i = 1, \ldots, N \)). Then, any open dynamic system of the form of Eq.~(\ref{eq:dynamic_system}) can be approximated by an RNN, with sigmoid activation function, to arbitrary accuracy.
\end{theorem}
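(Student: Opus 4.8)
The plan is to reduce the recurrent approximation problem to a finite, feedforward one by \emph{unrolling} the recursion. Since we only need to approximate the dynamic system on the finite horizon \( i = 1, \ldots, N \), the map from the initial state \( s_1 \) and the input sequence \( (x_1, \ldots, x_N) \) to the output sequence \( (p_1, \ldots, p_N) \) is a finite composition of copies of \( g \) interleaved with the readout \( \nu \). An RNN with sigmoid activations, \( s_{i+1} = \sigma(W s_i + V x_i + b) \) and \( p_i = \nu'(s_i) \), unrolls into a deep feedforward network with shared weights; so it suffices to show that the recurrent sigmoid block can emulate \( g \) and that a sigmoid readout can emulate \( \nu \), and then that the errors so introduced do not blow up as they propagate through the \( N \) compositions.

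First I would fix compact sets on which everything lives. The inputs \( x_i \) range over a compact set (e.g. \( [0,T] \) in the point-process setting), and I would argue inductively that the reachable states stay in a compact set \( S \subset \R^{J} \): this is immediate when the update is a bounded sigmoid map, and for the target system one restricts attention to the bounded trajectories of interest. On these compact sets I would invoke the feedforward universal approximation theorem: \( \nu \), being continuous, is approximated uniformly on \( S \) by a single-hidden-layer sigmoid network (Cybenko; Hornik et al.); and \( g \), being only measurable, is approximated by a sigmoid network \emph{in measure} --- for every \( \delta > 0 \) there is a sigmoid network agreeing with \( g \) up to error \( \delta \) outside a set of arbitrarily small measure, which is exactly the form of universal approximation Hornik et al. establish for measurable targets. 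The recurrent sigmoid nonlinearity of the RNN itself supplies these hidden layers.

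The main obstacle --- and the step I would treat with the most care --- is controlling \emph{error accumulation} through the unrolled recursion, compounded by the fact that \( g \) is merely measurable. Each application of the approximate \( g \) injects a fresh perturbation that feeds into the next state, so a naive estimate multiplies the per-step error by a Lipschitz-type constant raised to the power \( N \); one has to choose the per-step tolerance small enough as a function of \( N \) and those constants. For the measurable case one must additionally ensure the trajectory never enters the exceptional small-measure set on which the approximation of \( g \) fails --- handled by taking that set's measure small relative to the finitely many state distributions encountered and applying a union bound over the \( N \) steps. Since \( \nu \) is uniformly continuous on the compact state set, a small terminal state error yields a small output error. Assembling these pieces --- calibrating the per-step tolerances and exceptional-set measures against \( N \), the Lipschitz constants, and the modulus of continuity of \( \nu \) to reach the target accuracy \( \varepsilon \) --- produces a single sigmoid RNN whose output sequence approximates that of the dynamic system to arbitrary accuracy.
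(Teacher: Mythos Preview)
The paper does not prove this theorem; it is quoted verbatim from \cite{schafer2007recurrent} and invoked as a black box inside the proof of Theorem~\ref{thm:eventseqs}. So there is no ``paper's own proof'' to compare against. Your sketch is, in broad outline, the strategy of the cited source: unroll the finite-horizon recursion, approximate the transition map \( g \) and the readout \( \nu \) by single-hidden-layer sigmoid networks via the feedforward universal approximation theorems, and control the propagation of per-step errors through the \( N \) compositions.

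One point in your sketch does not hold up as written. For the merely measurable \( g \), you propose to approximate in measure and then argue the trajectory avoids the exceptional set by a ``union bound over the finitely many state distributions encountered.'' But the dynamic system in Eq.~\eqref{eq:dynamic_system} is deterministic: given \( s_1 \) and the inputs, the states \( s_1,\ldots,s_{N+1} \) are fixed points, not random variables with densities, so making the exceptional set have small Lebesgue measure gives no control over whether those specific points land in it. In \cite{schafer2007recurrent} this is handled differently --- the approximation guarantee for measurable \( g \) is stated in a weaker mode (essentially an in-measure or \( L^p \)-type statement on compacta, inherited from Hornik's results), not a uniform one, and the headline ``arbitrary accuracy'' has to be read in that sense. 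If you want uniform trajectory-wise accuracy, you need \( g \) continuous (so that feedforward uniform approximation on compacta applies directly and your error-accumulation bookkeeping goes through cleanly); for the application in this paper that suffices, since the parameter dynamics one actually needs to track are continuous.
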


Given that RNNs approximate $p_i$, we use continuity condition on basis $\phi$ and in turn $\hat u$ to show how to universally approximate an intensity function with an RNN.

\begin{theorem} \label{thm:eventseqs}
    Let \( \{t_{i} \}_{i=0}^{N} \) be a sequence of events with \( t_{i} \in [0, T] \) and \( \cint(t) \) be 
    {an intensity function.}
    {Given} a parametric family of functions \( \mathcal{F} = \{ \hat{u}(\cdot \, ; \, p) : p \in \mathcal{P} \} \) which is uniformly dense in \( C([0, T], \Rpp) \) and \( \hat{u}(x; \, p) \) continuous with respect to \( p \) for all \( x \in [0, T] \).
    Then {there} exists a recurrent neural network
    \begin{align} \label{eq:thm2}
        h_{i} &= \sigma(W h_{i-1} + v t_{i-1} + b) \nonumber \\
        \hat{p}_{i} &= A h_{i} &\textrm{ for } t \in (t_{i-1}, t_{i}] \nonumber \\
        \hat{\lambda}(t) &=\hat{u}(\tau; \hat{p}_{i}) &\textrm{ and } \tau = t - t_{i-1},
    \end{align}
    where \( \sigma \) is a sigmoid {activation function} and \( [W, v, b, A] \) are weights of appropriate shapes, such that \( \hat{\lambda}(t) \) approximates \( \cint(t) \) with arbitrary precision for all \( (0, T] \).
\end{theorem}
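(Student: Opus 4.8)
The plan is to chain the three ingredients already in hand: uniform density of $\mathcal{F}$ in $C([0,T],\Rpp)$, which supplies a good \emph{target} parameter for each inter-event segment; the RNN approximation of dynamic systems (Theorem~\ref{thm:ua_rnn}), which lets a recurrent network track that finite sequence of target parameters; and continuity of $\hat{u}$ in its parameter, which converts a small parameter error into a small intensity error. Fix $\varepsilon>0$ throughout.

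\textbf{Step 1 (segment and approximate each piece).} Split $\cint$ at the event times into segments $u_i(\tau)=\cint(t_{i-1}+\tau)$ for $i=1,\dots,N+1$, with $t_{N+1}=T$; each $u_i$ is continuous on its segment because $\cint$ is piece-wise continuous with breaks only at events. Extend each $u_i$ to an element of $C([0,T],\Rpp)$ (the "arbitrarily low intensity to zero" argument following Corollary~\ref{cor:basisstoneua} reconciles strict positivity with non-negativity here). By uniform density of $\mathcal{F}$, pick for each $i$ a parameter $p_i^{\star}\in\mathcal{P}$ with $d(\hat{u}(\cdot\,;p_i^{\star}),u_i)<\varepsilon/2$. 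This yields a finite target sequence $p_1^{\star},\dots,p_{N+1}^{\star}$.

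\textbf{Step 2 (realise the targets as a dynamic system and invoke Theorem~\ref{thm:ua_rnn}).} Since the event sequence is finite and the times $t_0<\cdots<t_N$ are fixed, construct a dynamic system of the form of Eq.~(\ref{eq:dynamic_system}) whose output at step $i$ is exactly $p_i^{\star}$: let $g$ advance a step counter (a measurable, indeed continuous, map that may ignore its input), and let $\nu$ send the $i$-th counter value to $p_i^{\star}$, extended (e.g. by piecewise-linear interpolation) to a continuous map on the whole state space — only finitely many values matter, so continuity is automatic. Theorem~\ref{thm:ua_rnn}, applied coordinatewise to the vector output, then produces a sigmoid RNN of precisely the form in Eq.~(\ref{eq:thm2}) — hidden recursion $h_i=\sigma(Wh_{i-1}+vt_{i-1}+b)$ with linear readout $\hat{p}_i=Ah_i$ — whose outputs satisfy $\lVert \hat{p}_i-p_i^{\star}\rVert$ as small as desired for every $i=1,\dots,N+1$ (a finite horizon, so the approximation is uniform over the relevant steps).

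\textbf{Step 3 (transfer parameter accuracy to intensity accuracy) and main obstacle.} Because $\hat{u}$ is continuous in $p$ uniformly over the compact domain $[0,T]$ — i.e. $p\mapsto\hat{u}(\cdot\,;p)$ is continuous into $C([0,T],\R)$ with the sup norm — restricting to a compact neighbourhood of the finite set $\{p_i^{\star}\}$ makes it uniformly continuous there, so there is $\delta>0$ with $\lVert p-p_i^{\star}\rVert<\delta \Rightarrow \sup_{\tau}\lvert \hat{u}(\tau;p)-\hat{u}(\tau;p_i^{\star})\rvert<\varepsilon/2$. Choosing the RNN accuracy of Step 2 below this $\delta$ and combining with Step 1 by the triangle inequality gives $\sup_{\tau}\lvert \hat{u}(\tau;\hat{p}_i)-u_i(\tau)\rvert<\varepsilon$ on each segment; maximising over the finitely many segments yields $d(\hat{\lambda},\cint)<\varepsilon$ on $(0,T]$. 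The delicate point is exactly this uniform-in-$\tau$ control: the stated hypothesis is continuity of $\hat{u}(x;\cdot)$ for each fixed $x$, so one must read it as joint continuity of $\hat{u}$ on $[0,T]\times\mathcal{P}$ (which holds for every basis-sum family in Table~\ref{tab:validbasis}) to upgrade to uniform continuity on compacta; a secondary, routine point is checking that the constructed dynamic system genuinely meets the measurability/continuity hypotheses of Theorem~\ref{thm:ua_rnn}. Everything else is bookkeeping over a finite number of segments.
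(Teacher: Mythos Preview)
Your proposal is correct and follows essentially the same three-step structure as the paper's proof: pick target parameters $p_i^{\star}$ by uniform density to get $\varepsilon/2$ on each segment, approximate them with an RNN via Theorem~\ref{thm:ua_rnn}, and use continuity of $\hat{u}$ in $p$ to convert parameter error into another $\varepsilon/2$ of intensity error before applying the triangle inequality. You are in fact more explicit than the paper in two places --- the construction of the dynamic system fed to Theorem~\ref{thm:ua_rnn}, and the need for uniform-in-$\tau$ (joint) continuity to ensure the per-$\tau$ $\delta$'s have a positive infimum, which the paper handles by simply ``taking the minimum over $\delta_{\tau}$'s'' without further comment.
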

\begin{proof}
    Let \( \varepsilon > 0 \) be arbitrary.
    For any interval \( (t_{i-1}, t_{i} ] \), we know from the uniform density of \( \mathcal{F} \) that 
    there exists a \( p_{i} \) such that
    \begin{equation} \label{eq:pf1}
        \sup_{\tau \in [0, T]} \vert \hat{u}_{i}(\tau; p_{i}) - u_{i}(\tau) \vert \leq \frac{\varepsilon}{2}.
    \end{equation} 

    By the continuity conditions of \( \hat{u} \), it follows that for each \( p_{i} \) and any \( \tau \in [0, T] \) there exists \( \delta_{i} \) such that
    \begin{equation}
        \Vert p_{i} - \hat{p}_{i} \Vert < \delta_{i} \implies \vert \hat{u}(\tau; \, p_{i}) - \hat{u}(\tau; \, \hat{p}_{i}) \vert < \frac{\varepsilon}{2}
        \label{eq:pdelta}
    \end{equation}
    by taking the minimum over \( \delta_{\tau}\)'s in the (\(\varepsilon/2\), \(\delta_{\tau}\))-condition of continuity for all \( \tau \in [0, T] \) (where the subscript emphasises the range of \( \tau \) for fixed \( i \)).
    
    The LHS of Eq.~(\ref{eq:pdelta}) is the precision needed in our RNN approximtor for each interval \( (t_{i-1}, t_{i}] \). We take the minimum approximation discrepancy over the sequence of $\hat p_i$'s, \( \delta := \min_{i} \delta_{i} \) and use an RNN with precision \( \delta \) to bound the approximation quality due to $\hat p_{i} $'s using Theorem~\ref{thm:ua_rnn},
    \begin{equation} \label{eq:pf2}
        \sup_{\tau \in [0, T]} \vert \hat{u}(\tau; \, p_{i}) - \hat{u}(\tau; \, \hat{p}_{i}) \vert < \frac{\varepsilon}{2}.
    \end{equation}

    {Using the} triangle inequality of the uniform metric, {we can combine and bound the discrepancies due to $\hat u$ in Eq.~(\ref{eq:pf1}) and those due to $\hat p_{i}$ in Eq.~(\ref{eq:pf2}),}
    \begin{equation}
        \sup_{\tau \in [0, T]} \vert u_{i}(\tau) - \hat{u}(\tau; \, \hat{p}_{i}) \vert < \varepsilon.
        \label{eq:triangled}
    \end{equation}
    Eq.~(\ref{eq:triangled}) holds for all \( i \in \{ 1, \ldots, N \} \). Thus uniform density condition for \( \cint(t) \) also holds for the piece-wise approximator \( \hat{\lambda}(t) \) given by Eq.~(\ref{eq:thm2}) over the entire sequence.
\end{proof}

From Theorem~\ref{thm:eventseqs} and Corollary~\ref{cor:basisstoneua}, universal approximation with respect to the uniform metric follows immediately when using basis functions which are continuous with respect to their parameter space, for example Table~\ref{tab:validbasis}.

\bheader{Extensions and discussions.}  While the original work on learning the compensator function~\cite{omi2019fully} does not provide theoretical backings for its proposal, we note that
Theorem~\ref{thm:eventseqs}, combined with universal approximation capabilities of monotone neural networks~\cite{sill1998monotonic}, can be used to show that the class of monotonic (increasing) neural networks provide universal approximation for compensator functions.
The guarantee described here does not explicitly account for additional dimensions or marks. To extend Theorem~\ref{thm:eventseqs} in this manner, we consider replacing basis functions \( \phi(x) \), which has domain \( \R \), to basis functions with extended domain \( \R \times K \) where \( K \) is a compact set. For example, \( K \) can be a set of discrete finite marks in the case of approximated marked temporal point processes. The universal approximation property would then generalise as long as \( \sum (\phi) \) is dense in \( C([0,T] \times K, \Rpp) \) and continuous in the parameter space of the basis functions. Likewise, if we want to approximate a spatial point process, we let \( K = \R^{2} \) and find an appropriate set of basis functions with domain \( \R \times \R^{2} \).

It is worth mentioning two distinctions from the intensity free approach~\cite{shchur2020intensityfree}. First, although density approximation allows for direct event time sampling, the log-normal mixture representation assumes that an event will always occur on \( \Rp \) --- specifically, events cannot naturally stop. Instead, the intensity function representation allow for events to stop with probability \(1 - \prob(\tau < \infty ) = \exp\left(-\Lambda^{\star}(\infty)\right) \). In other-words, \( 1 - \prob(\tau < \infty ) \) is the probability of events not occurring in finite time, which is non-zero when the intensity function decays and stays at zero. Furthermore, the intensity free approach proposed one functional form (log-normal mixture) for approximating densities, whereas we show that a variety of basis functions all fulfil the goal of universal approximation.

\section{Implementation with Neural Networks}
\label{sec:implementation}

We propose \textit{UNIPoint}, a neural network architecture implementing a fully flexible intensity function. Let \( \{ t_{i} \}_{i=0}^{N} \) be a sequence of events with corresponding interarrival times \( \tau_{i} = t_{i} - t_{i-1} \). Let \( M \) be the size of the hidden state of the RNN, and \( \phi(\cdot; \, \cdot) \) be the chosen basis function with parameter space \( \mathcal{P} \). Let \( P \) denote the dimension of the parameter space. The approximation guarantees (given in Corollary~\ref{cor:basisstoneua}) hold in the limit of an infinite number of basis functions, in practice the number of basis functions is a hyper-parameter, denoted as \( J \). This network has four key components. 

\bheader{Recurrent Neural Network.} 
We use a simple RNN cell~\cite{elman1990finding}, though other popular variants would also work, e.g., LSTM, or GRU. 
The recurrent unit produces hidden state vector $h_i$ from $h_{i-1}$ the previous hidden state and $\tau_{i-1}$ the normalised interarrival time (divided by standard deviation): 
\begin{equation} \label{eq:unipoint_rnn}
    h_{i} = f(Wh_{i-1} + v{\tau}_{i-1} + b)
\end{equation}
Here \( W \), \( v \), \( b \), and \( h_{0} \) are learnable parameters. \( f \) is any activation function compatible with RNN universal approximation, i.e., sigmoid \( \sigma \)~\cite{schafer2007recurrent}.

\bheader{Basis Function Parameters} are generated using a 
linear transformation that maps the hidden state vector of the RNN \( h_{i} \in \R^{M} \) to parameters \( p_{i} = (p_{i1},\ldots,p_{iJ}) \),
\begin{equation}
    \label{eq:to_param}
    p_{ij} = A_{j} h_{i} + B_{j}, \quad t \in (t_{i-1}, t_{i}], \; j \in \{1, \ldots, J\}.
\end{equation}
Here \( A_{j} \) and \( B_{j} \) are learnable parameters and \( p_{ij} \in \mathcal{P} \).

Eq.~(\ref{eq:unipoint_rnn}) and Eq.~(\ref{eq:to_param}) defines the RNN which approximates a point processes' underlying dynamic system. The error contribution of these two equations is upper bounded by the sum of their individual contributions~\cite[Theorem 2]{schafer2007recurrent}.

\bheader{Intensity Function.}
{Using parameters \( p_{i1}, \ldots, p_{iJ} \)}, the intensity function with respect to time since the last event \( \tau = t - t_{i-1} \) is defined as:
\begin{equation}
    \label{eq:implintensity}
    \hat{\lambda}(\tau) = f_{\textrm{SP}} \left[ \sum_{j=1}^{J} \phi(\tau; \, {p}_{ij}) \right],
    \quad \tau \in (0, t_{i}-t_{i-1}] ,
\end{equation}
where \( f_{\textrm{SP}}(x) = \log(1 + \exp(x)) \) is the softplus function.

\bheader{Loss Function.}
We use the point process negative log-likelihood, as per Eq.~(\ref{eq:likelihood}). In most cases the integral cannot be calculated analytically so instead we calculate it numerically using Monte-Carlo integration~\cite{press2007numerical}, {see Training settings and the online appendix~\cite[Section F]{appendix}.}

Our use of RNNs to encode event history is similar to other neural point process architectures. We note that \cite{du2016recurrent} only supports monotonic intensities. Our representation is more parsimonious than \cite{mei2017neural} since the hidden states need not be functions over time, yet the output can still universally approximate any intensity function. \cite{omi2019fully} produce monotonically increasing compensator functions but can have invalid inter-arrival times.

\section{Evaluation}
\label{sec:evaluation}

We compare the performance of UNIPoint models to various simple temporal point processes and neural network based models on three synthetic datasets and three real world datasets. For the simple temporal point processes we consider self-exciting intensity functions which are piece-wise monotonic (Self-Correcting process~\cite{isham1979self} and Exponential Hawkes process~\cite{hawkes1971spectra}) and non-monotonic (Decaying Sine Hawkes process). The details of dataset preprocessing, model settings and parameter sizes can be found in the appendix~\cite[Section A and B]{appendix}.

\begin{table*}[ht]
    {\fontsize{9.0pt}{10.0pt} \selectfont
    \centering
    \begin{tabular}{|cl|lll|lll|}
        \hline
        \multicolumn{2}{|c}{Dataset} & \multicolumn{3}{c|}{Synthetic} & \multicolumn{3}{c|}{Real World} \\ 
        \cline{1-2}
        \multicolumn{2}{|l|}{Models} & \multicolumn{1}{c}{SelfCorrecting} & \multicolumn{1}{c}{ExpHawkes} & \multicolumn{1}{c}{DecayingSine} & \multicolumn{1}{|c}{MOOC} & \multicolumn{1}{c}{Reddit} & \multicolumn{1}{c|}{StackOverflow} \\
        \cline{3-8}
        \multicolumn{1}{|l|}{\multirow{4}{*}{\rotatebox[origin=c]{90}{Baseline}}} & ExpHawkes & \(-0.994 \pm .001\) & \(0.044 \pm .037\) & \(-0.838 \pm .019\) & \(3.578 \pm .060\) & \(-0.100 \pm .039\) & \(-1.031 \pm .025\) \\
        \multicolumn{1}{|l|}{} & PLHawkes & \(-0.994 \pm .001\) & \(0.036 \pm .037\) & \(-0.845 \pm .019\) & \(0.532 \pm .070\) & \(-0.787 \pm .035\) & \(-0.918 \pm .024\) \\
         \multicolumn{1}{|l|}{} & RMTPP & \(-0.776 \pm .003\) & \(0.054 \pm .038\) & \(-0.864 \pm .020\) & \(2.040 \pm .098\) & \(-0.336 \pm .031\) & \(-0.864 \pm .022\) \\
        \multicolumn{1}{|l|}{} & FullyNeural & \(-0.789 \pm .003\) & \(0.059 \pm .037\) & \(-0.833 \pm .020\) & \(4.699 \pm .054^{\dagger}\) & \(0.206 \pm .046^{\dagger}\) & \(-0.810 \pm .022\) \\
        \multicolumn{1}{|l|}{} & NeuralHawkes & \(-0.777 \pm .006^{\dagger}\) & \({0.066} \pm .037^{\dagger}\) & \({-0.821} \pm .021^{\dagger}\) & \(4.641 \pm .110\) & \(0.201 \pm .048\) & \({-0.801} \pm .023^{\dagger}\) \\
        \hline
        \multicolumn{1}{|l|}{\multirow{5}{*}{\rotatebox[origin=c]{90}{UNIPoint}}} & ExpSum & \({-0.774} \pm .008^{\ddagger}\) & \(0.056 \pm .042\) & \(-0.828 \pm .020\) & \(3.114 \pm .125\) & \(0.151 \pm .045\) & \(-0.812 \pm .023\) \\
        \multicolumn{1}{|l|}{} & PLSum & \(-0.779 \pm .006\) & \({0.064} \pm .038^{\ddagger}\) & \(-0.829 \pm .020\) & \(\mathbf{4.939 \pm .085}^{\ddagger} \) & \(0.162 \pm .046\) & \(-0.814 \pm .023\) \\
        \multicolumn{1}{|l|}{} & ReLUSum & \(-0.780 \pm .007\) & \(0.059 \pm .039\) & \(-0.828 \pm .021\) & \(4.676 \pm .075\) & \(\mathbf{0.221 \pm .046}^{\ddagger} \) & \(-0.810 \pm .023\) \\
        \multicolumn{1}{|l|}{} & CosSum & \(-0.777 \pm .008\) & \(0.062 \pm .039\) & \(-0.828 \pm .020\) & \(4.471 \pm .075\) & \(0.139 \pm .044\) & \(-0.814 \pm .023\) \\
        \multicolumn{1}{|l|}{} & SigSum & \(-0.776 \pm .007\) & \(0.064 \pm .038\) & \(-0.827 \pm .020^{\ddagger}\) & \(4.346 \pm .076\) & \(0.170 \pm .045\) & \(-0.814 \pm .023\) \\
        \multicolumn{1}{|l|}{} & MixedSum & \(-0.779 \pm .007\) & \(0.062 \pm .038\) & \(-0.828 \pm .020\) & \(4.928 \pm .085\) & \(0.201 \pm .047\) & \(-0.804 \pm .023^{\ddagger}\) \\
        \hline
    \end{tabular}
    }
    \caption{Averaged log-likelihood scores with corresponding 95\% confidence intervals. A {higher score is better}; the best of the baselines are indicated by \( \dagger \) and the best of the UNIPoint models are indicated by~\( \ddagger \). Bold indicates results when the difference between \( \dagger \) and \( \ddagger \) are {\it significantly better} (t-test \( p = 0.05 \)).}
    \label{tab:llscores}
\end{table*}

\subsection{Synthetic Datasets}

We synthesise datasets from simple temporal point process models, generating \( 2,048 \) event sequences each 
containing \( 128 \) events. This results in roughly \( 262,000 \) events, which is of the same magnitude tested in~\cite{omi2019fully}.
Self-correcting process and exponential Hawkes process datasets have previously been used in other neural point process studies~\cite{du2016recurrent,omi2019fully,shchur2020intensityfree}. We consider a decaying sine Hawkes process to test whether the models capture non-monotonic self-exciting intensity functions.
The following synthetic datasets are used:

\bheader{Self}-{\bf Correcting} Process. The intensity function is \[ \cint(t) = 
\exp \left( \nu t - \sum_{t_{i} < t} \gamma \right), \] where \( \nu = 1 \) and 
\( \gamma = 1 \).

\bheader{Exp}onential {\bf Hawkes} Process. The intensity function is a Hawkes process with exponential decaying triggering kernel, given by \[ \cint(t) = \mu + \alpha \beta \sum_{t_{i} < t} \exp(-\beta(t - t_{i})), \] where \( \mu = 0.5 \), \( \alpha = 0.8 \), and \( \beta = 1 \).

\bheader{Decaying Sine} Hawkes Process. The intensity function is a Hawkes process with a sinusoidal triggering kernel product with an exponential decaying triggering kernel: \[ \cint(t) = \mu + \gamma \sum_{t_{i} < t} (1 + \sin(\alpha(t - t_{i})) \exp(-\beta(t - t_{i})), \] where \( \mu = 0.5 \), \( \alpha = 5\pi\), \( \beta = 2 \), and \( \gamma = 1 \).

\subsection{Real World Dataset}

We further evaluate the performance of our model with three real world datasets. Although these dataset originally have marks/event types, we ignore such information to test UNIPoint.
The real world datasets used are:

\bheader{MOOC\footnote{\label{fn:data}\url{https://github.com/srijankr/jodie/}
		}.}
 A dataset of student interactions in online courses~\cite{kumar2019predicting}, 
{previously used for evaluating neural point processes~\cite{shchur2020intensityfree}}. Events 
correspond to different types of interaction, e.g., watching videos.

\bheader{Reddit\footnotemark[\value{footnote}].} A dataset of
user posts on a social media platform~\cite{kumar2019predicting}, 
{previously used for evaluating 
neural point processes~\cite{shchur2020intensityfree}}. Each event sequence corresponds to a user's 
post behaviour.

\bheader{StackOverflow}~\cite{du2016recurrent}. 
A dataset of events which consists of users gaining badges on a question-answer website. Only users with at least 40 badges between 01-01-2012 and 01-01-2014 are considered.

\subsection{Baselines}

The following traditional and neural network point process models are compared to our models.
We implement all but the NeuralHawkes baseline. We also compare to TransformerHawkes~\cite{zuo2020transformer} but the results are sensitive to model settings, the observations from which are discussed in the appendix~\cite[Section C]{appendix}.

\bheader{Exp}onential {\bf Hawkes} Process
The point process likelihood is optimised to determine parameter \( \mu \), \( 
\alpha \), and \( \beta \) in intensity function \[ \cint(t) = \mu 
+ \alpha \beta \sum_{t_{i} < t} \exp(-\beta(t - t_{i})). \]

\bheader{P}ower {\bf L}aw {\bf Hawkes} Process.
The point process likelihood is optimised to determine parameters \( \mu \), \( \alpha \), and \( \beta \) in intensity function \[ \cint(t) = \mu + \alpha \sum_{t_{i} < t} (t - t_{i} + \delta)^{-(1+\beta)} .\] The \( \delta \) parameter is fixed at \( 0.5 \) to compensate for the difficulty of the power law intensity function being infinity when \( t - t_{i} + \delta = 0 \) ~\cite{bacry2015hawkes}.

\bheader{RMTPP}~\cite{du2016recurrent}. We implement the RMTPP neural network 
architecture as a baseline. The intensity 
function of RMTPP
\begin{equation}
\label{eq:rmtpp}
\cint(t) = \exp(v^{T} h_{i} + w(t - t_{i-1}) + b)
\end{equation}
is defined with respect to the RNN hidden state \( h_{i} \). We use a RNN size of \( 48 \) for testing.

\bheader{FullyNeural}~\cite{omi2019fully}. We also implement the fully neural network point process. The integral of 
the intensity function (compensator) is defined as a neural network with RNN hidden state and 
event time input. We use a RNN size of \( 48 \) and fully connected layer of size \( 48 \) to produce the compensator.

\bheader{NeuralHawkes}\footnote{\url{https://github.com/hmeiatjhu/neural-hawkes-particle-smoothing}}~\cite{mei2017neural}. We utilise the reference implementation for NeuralHawkes~\cite{mei-2019-smoothing}, which provides a neural network architecture that encodes the decaying nature of Hawkes process exponential kernels in the LSTM of the model.
We use a LSTM size of \( 48 \) and default parameters for other model settings.

\begin{figure*}
    \centering
    \includegraphics[width=\textwidth]{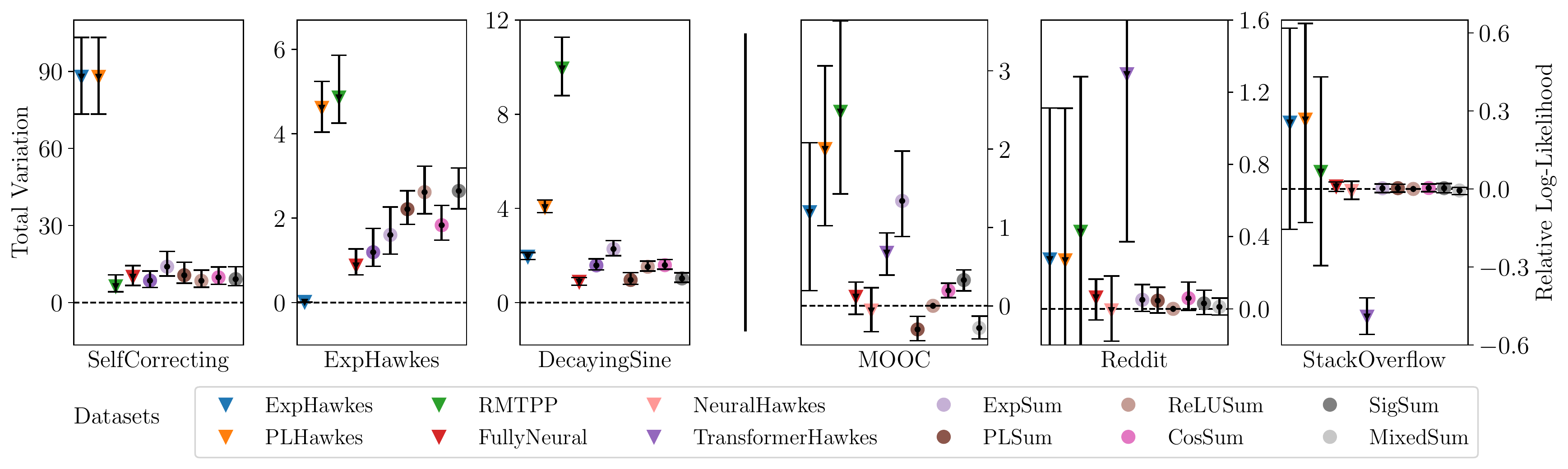}
    \caption{
    Total variation of intensity functions for synthetic 
    datasets (left) and relative log-likelihood of event 
    sequences for real world datasets 
    standardised by subtracting the 
    score of ReLUSum UNIPoint (right).
    {\em Lower score is better}. 
    Markers correspond to the mean of the score and error bars to the interquartile range.
    A missing marker indicate a mean above the visible axis range.}
    \label{fig:boxplot}
\end{figure*}

\subsection{Training Settings}

We fit a UNIPoint model for each of the five basis function types described in Table~\ref{tab:validbasis} with softplus transfer functions and \( 64 \) basis functions with learnable parameters. The mixture of basis functions, MixedSum, is used, with \( 32 \) power law and \( 32 \) ReLU basis functions. We study effects of the number of basis functions in the appendix~\cite[Section E]{appendix}. We fit models for all synthetic and real world datasets, with a \(60:20:20\) train-validation-test split. Our models are implemented in PyTorch\footnote{\url{https://pytorch.org}~\cite{paszke2017automatic}}.

During training, we use a single sample per event interval to calculate the loss function as we find using multiple samples does not improve performance, as shown in the appendix~\cite[Section F]{appendix}.
All UNIPoint models tested employ an RNN with \( 48 \) hidden units, a batch size of \( 64 \), and are
trained using Adam \cite{Kingma2014AdamAM} with \( L2 \) weight decay set to \( 10^{-5} \).
The validation set is used for 
early stopping: training halts if the validation loss does not improve by more than \( 10^{-4} \) for \( 100 \) successive mini-batches. The training for one of the real world datasets (e.g., StackOverflow) takes approximately 1 day.

We further test UNIPoint using LSTMs in Appendix G and an alternative transfer function in Appendix H.

\subsection{Evaluation Metrics}
\bheader{Holdout Log-likelihood.}
We calculate the log-likelihood of event sequences using Eq.~(\ref{eq:likelihood}). We numerically calculate the integral term with Monte-Carlo integration~\cite{press2007numerical} if it cannot be calculated analytically.

\bheader{Total Variation.}
We use total variation as it mimics the uniform metric as they both depend on the difference between the true and approximate intensity function. It is defined as
\(
    \tv(f, g) = \int \vert f(s) - g(s) \vert^{2}\, ds.
\)
Total variation can only be used on synthetic datasets where the true intensity is known. To calculate it, we use Monte-Carlo integration~\cite{press2007numerical}. We do not compute total variation for NeuralHawkes as the reference implementation does not allow the intensity function to be evaluated over fixed event histories.

\section{Results}
\label{sec:results}

Table~\ref{tab:llscores} reports log-likelihoods of all models across the three synthetic and three real world datasets. Figure~\ref{fig:boxplot} reports the total variations of intensity functions for the synthetic datasets and relative log-likelihood (calculated by subtracting the log-likelihood of UNIPoint ReLUSum) for the three real world datasets.
The total variation scores are only available for synthetic datasets since calculating the total variation requires a ground truth intensity function. 

\bheader{Synthetic datasets.} Contrasting the log-likelihood and total variation metrics reveal interesting insights about model performance. 
The SelfCorrecting dataset has a piece-wise monotonically increasing intensity function. Both metrics indicate that ExpHawkes, PLHawkes, and RMTPP under perform the other approaches by a large margin, since they are restricted to piece-wise monotone intensity functions. All UNIPoint variants perform well, achieving average likelihoods within 0.01 of each other. ExpSum is the best variant, possibly due to its exponential shape matching that of the ground-truth SelfCorrecting intensity function.

For the ExpHawkes dataset, the ExpHawkes baseline has the lowest total variation (close to zero, as expected) but not the best holdout log-likelihood.
This indicates that models with good log-likelihood scores still have the potential to overfit given the wrong intensity function representation.
Despite UNIPoint's guarantees with infinite basis functions, ExpSum shows significantly better total variation scores than other UNIPoint models here --- showing that, selection of basis functions for specific datasets is important.

For DecayingSine, the intensity between events are non-monotonic. All UNIPoint variants perform comparably on both the log-likelihood and total variation metric. The FullyNeural approach performs comparably with the UNIPoint variants on total variation, but is inferior on log-likelihood. This is likely due to it assigning non-zero probabilities to negative event times. NeuralHawkes has the best log-likelihood for this dataset, but the difference with respect to SigSum is not significant. 

In addition, we visualise intensity functions learnt by UNIPoint and other approaches, see the appendix~\cite[Section D]{appendix}. The neural baseline models learn similar intensity functions to UNIPoint in ExpHawkes. However in the case of the MOOC dataset, RMTPP learns an intensity function that is different to those learnt by the other neural point processes. Meanwhile, FullyNeural does not exhibit strong decaying components in the intensity function.

\bheader{Real-world datasets.} For all three real-world datasets, baselines ExpHawkes, PLHawkes, and RMTPP significantly under-perform in comparison to the rest of the approaches. This likely occurs due to their inability to support non-monotone intensity functions in inter-event intervals.

We observe that UNIPoint variants are significantly better than the baselines for MOOC and Reddit. UNIPoint is second best (to NeuralHawkes) on StackOverflow dataset, but the difference is not statistically significant. 
NeuralHawkes performs strongly on the StackOverflow dataset, potentially because it has the closest architecture to the UNIPoint ExpSum variant, while also being more complex. In particular NeuralHawkes has time decaying hidden states and LSTM recurrent units rather than the a perceptron recurrent unit and vector-formed hidden state of UNIPoint.
The StackOverflow dataset has a longer average sequence length than MOOC and Reddit, which would advantage the LSTM recurrent units over the standard RNN --- since the RNN is more likely to suffer from vanishing or exploding gradients than the LSTM which allows for long-term dependencies~\cite{hochreiter1997lstm}. Details on dataset characteristics can be found in the appendix~\cite[Section A]{appendix}.
One peculiar result is the performance of ExpSum in the MOOC dataset. The reason for the poor performance is that the exponential basis function is unstable with large interarrival times, which can cause numeric overflow or underflow.
The performance of UNIPoint variants depend greatly on the particular basis function used for each dataset. We find that no single type of basis function ensures that a UNIPoint model performs best over all datasets. For example, in the MOOC dataset, ignoring ExpSum, the UNIPoint models have log-likelihood scores from \( 4.346 \pm 0.076 \) to \( 4.939 \pm 0.085 \).

Using mixture of basis function, MixedSum provides good overall performance. Among the UNIPoint variants, it is either the best or a close second across all datasets, suggesting that using different types of basis functions improves model flexibility in practice even with a fixed parameter budget. 
We also observe an improvement in performance when more basis functions are used, see Appendix E.

Overall, our evaluations demonstrate the power of UNIPoint for modelling complex intensity function that are not piece-wise monotone. Results on real-world datasets show models with flexible intensity functions outperform Hawkes processes. Open questions remain on which neural architectures, among the ones with universal approximation power, strike the best balance of representational power, parsimony, and learnability.

\section{Conclusion}

We develop a new method for universally approximating the conditional intensity function of temporal point processes. This is achieved by breaking down the intensity function into piece-wise continuous functions and approximating each segment with a sum of basis functions, followed by a transfer function. We also propose UNIPoint, a neural implementation of the approximator. Evaluations on synthetic and real world benchmarks demonstrates that UNIPoint consistently outperform the less flexible alternatives. 
Future work include:
investigating methods for selecting and tuning different basis functions and further theoretical work on representation complexity, expressiveness and learnability.

\section*{ Acknowledgments}
This research was supported in part by the Australian Research Council Project DP180101985 and AOARD Project 20IOA064. This research was supported by use of the Nectar Research Cloud, a collaborative Australian research platform supported by the National Collaborative Research Infrastructure Strategy (NCRIS).

\bibliography{my_bib}

\clearpage

\appendix

\section{Dataset Preprocessing}
\label{app:dataset_preprocessing}

We use two different types of preprocessing steps.

For the first, we normalise the interarrival time inputs used for the RNN. We only use this normalisation for models which we implement. 
Specifically, over all datasets, inputs to the RNN are standardised by the training set mean and standard deviation of interarrival times. Eq.~(\ref{eq:unipoint_rnn}) with the preprocessing included is
\begin{align*}
    h_{i} &= f(Wh_{i-1} + v{\hat\tau}_{i-1} + b) \\
    {\hat\tau}_{i-1} &= \frac{\tau_{i-1} - \mu}{\sigma}
\end{align*}
where \( \mu \) is the average interarrival time over all sequences and \( \sigma \) is the standard deviation of the interarrival time over all sequences.

The starting token of the RNN \( h_{0} \in \R^{N} \) is a learnable parameter vector, where \( \tau_{0} = 0 \) to calculate the first hidden state \( h_{1} \).

For the second we normalise the interarrival times for evaluating the intensity function, and thereby the log-likelihood calculation. Instead of using plain interarrival times, we divide by the training standard deviation. We only normalise inputs on the real world datasets. Training the UNIPoint model (and some other neural network models) without this normalisation, numeric errors often cause issues. We did not experience this issue in the synthetic dataset so we did not apply normalisation.

\section{Model Details}
\label{app:model_details}

In Table~\ref{tab:num_params}, we detail the number of learnable parameters for each of the models tested.

\begin{table}[ht]
    \centering
    \begin{tabular}{ll}
        \toprule
        Model & \# Params. \\
        \midrule
        ExpHawkes & 2 \\
        PLHawkes & 3 \\
        RMTPP & 2498 \\
        FullyNeural & 7249 \\
        NeuralHawkes & 32832 \\
        ExpSum & 8768 \\
        PLSum & 8768 \\
        ReLUSum & 11904 \\
        CosSum & 8768 \\
        SigSum & 11904 \\
        MixedSum & 10336 \\
        \bottomrule
    \end{tabular}
    \caption{The number of learnable parameter for fitted models.}
    \label{tab:num_params}
\end{table}

\bheader{RMTPP. } We use a 48 dimension RNN with a single layer. The hidden state of the RNN is directly used to define the intensity function, as per Eq.~(\ref{eq:rmtpp}), yielding a small number of total learnable parameters. The formulation of the intensity function can be considered as a restricted instance of the ExpSum UNIPoint model with only 1 basis function.

\bheader{FullyNeural. } We use a 48 size dimension RNN with a single layer. The compensator function is computed with a one hidden layer neural network. The 48 dimension RNN hidden state is an input to the one hidden layer neural network, where the hidden layer has a size of 48 as well. The output dimension of the neural network is 1, such that it approximates the compensator value of a point process.

\bheader{NeuralHawkes. } We use the updated implementation provided by~\cite{mei-2019-smoothing}. We use 48 LSTM cells, a batch size of 64, a learning rate of 1e-3, and trained for a maximum of 500 epochs. The best performing model out of the \( 500 \) epochs is saved as the final model, where performance is measured in log-likelihood of the validation set.

\section{Transformer Hawkes}
\label{app:transformer_hawkes}

We evaluate the recently proposed TransformerHawkes model~\cite{zuo2020transformer} in our setting. We used the implementation released by the authors, and the following general learning setting. For each of the settings, we use the default parameters for dropout (0.1), learning rate (1e-4), and smoothness (0.1); and we use a maximum of 200 training epochs.

We use two model setting when testing the TransformerHawkes, corresponding to a small and large parameter set. The small setting corresponds to using 4 attention heads, 4 layers, model dimension of 16, encoding RNN dimension of 8, inner dimension of 16, key size of 8, and value size of 8; resulting in 11745 learnable parameters. The large setting corresponds using 4 attention heads, 4 layers, model dimension of 32, RNN dimension size 16, inner dimension of 32, key size of 16 and value size of 16; resulting in 45761 learnable parameters.

Table~\ref{tab:th} summaries the log-likelihood results. We observe that across the 3 synthetic and 3 real world datasets, it either underperforms or outperforms all other approaches by a large margin. We posit three possible reasons, while still working towards a better understanding of this result: (1) model sizes evaluated here is very different from those in the paper, which was in the magnitude of 100K parameters; (2) a different objective function used than the other models, where an event time prediction tasks contribute to the loss; and (3) it is sensitive to training and hyper-parameter setting that we have yet to identify.

The paper uses model settings which result in significantly more learnable parameters, where the largest settings tested are roughly 1000K learnable parameters large. As we are operating in smaller parameter settings, the degradation of performance .
Additional to the scale of the learnable parameters, the loss function of TransformerHawkes is not in the same form as the other models examined in the main text of the paper. In particular, TransformerHawkes uses the RMSE of event time prediction in addition to the log-likelihood of point processes in the loss functions, as per Eq.~(\ref{eq:likelihood}). To accommodate for the extra component of the loss function, a specific event time prediction layer is used. This could account for the highly variable performance of TransformerHawkes in comparison to all other models.

\section{Fitted Intensity Functions}
\label{app:fitted_intensity_functions}

We present the ExpHawkes and MOOC fitted intensity functions in Figure~\ref{fig:event_exphawkes_intensity} and Figure~\ref{fig:event_mooc_intensity} respectively. We use dotted lines in the ExpHawkes plot so we can see the underlying true intensity as we have a ground-truth point process for the generated synthetic dataset.

In Figure~\ref{fig:event_exphawkes_intensity}, we can see that all plotted fitted models fit the true intensity quite closely. However, when many events occur in succession, some of the models deviate from the true intensity function. In particular, FullyNeural has some erratic behaviour, where the intensity is under-estimated before events around time 8. RMTPP also has some errors, however they are not as visible as FullyNeural despite having a lower log-likelihood in Table~\ref{tab:llscores}. For our UNIPoint processes, PLSum slighly underestimates the intensity function when it decays, as seen around time 6. MixedSum seems to be able to fit the true intensity better than PLSum (and the other neural models) by having a mixture of ReLU and powerlaw basis functions.

In Figure~\ref{fig:event_mooc_intensity}, we can see that for the specified event sequence, the RMTPP fails to learn similar intensity function shapes to the other neural models. The PLSum and MixedSum models have similar shapes with strong decaying intensity functions after events, similar to the traditional intensity functions. The FullyNeural model however, does not have decaying components in the functional form of its intensity function, thus the change after an event is more smooth. The FullyNeural intensity function also rises less steeply after an event when compared to the UNIPoint models.

\section{Choosing the Number of Basis Functions}
\label{app:num_basis_functions}

Figure~\ref{fig:varymooc} shows the log-likelihood for PLSum on the MOOC dataset for a different numbers of basis functions. More basis functions leads to better log-likelihood scores due to being more flexible in representation. The log-likelihood scores plateaus from 16 basis functions onward. We choose 64 basis functions for our experiments. 

\begin{figure}[ht]
    \centering
    \includegraphics[width=0.45\textwidth]{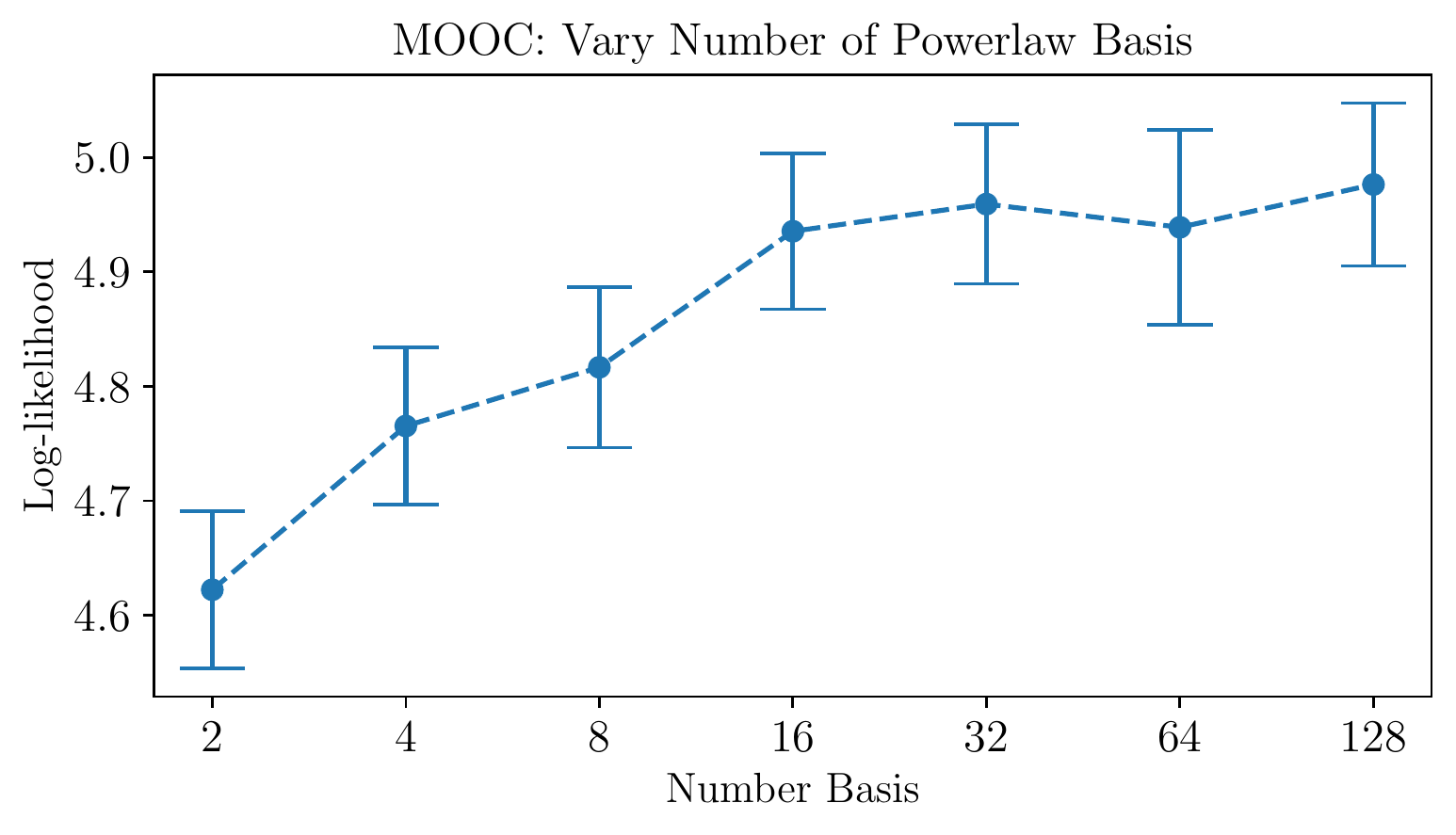}
    \caption{Varying number of basis function for PLSum in the MOOC dataset. Points indicate the average log-likelihood and the error bars indicate the 95\% confidence interval.}
    \label{fig:varymooc}
\end{figure}

\section{Choosing the Number of Monte Carlo Samples}
\label{app:num_mc_samples}

We utilise Monte Carlo integration to calculate the compensator term in the point process log-likelihood, Eq.~(\ref{eq:likelihood}). Table~\ref{tab:mc} shows the differences in ReLUSum log-likelihood for the MOOC dataset over different number of MC integration points, which are used to calculate the point process compensator, or the second term in Eq.~(\ref{eq:likelihood}). Instead of considering the average log-likelihood, as per Table~\ref{tab:llscores}, we calculate the relative log-likelihood scores between different number of Monte Carlo samples. Each row contains the average absolute difference with a 256 sample approximation of the fitted ReLUSum log-likelihood loss function. 
We can see that the effect size is in the third decimal point, smaller than the standard deviations.
The number of integration points contribute highly to computation cost in training. Thus, in training we use a single Monte Carlo sample to calculate the loss function.

\section{UNIPoint with LSTM}
\label{app:unipoint_lstm}

Additional to using a standard RNN units in our UNIPoint architecture, we also trying using an LSTM --- replacing Eq.~\eqref{eq:unipoint_rnn}. We limit our testing to the best performing UNIPoint models on real world datasets (Table~\ref{tab:llscores}). Results of holdout log-likelihood scores using LSTMs are documented in Table~\ref{tab:llscores_lstm}. Interestingly, we see a drop in log-likelihood over UNIPoint processes on the MOOC dataset. Despite this, we see quite a large increase in log-likelihood for the other two real world datasets. In the Reddit dataset, the performance of the UNIPoint model which is significantly better than the baseline models is further increased with an LSTM. For the StackOverflow dataset, although performance of UNIPoint increases we still are not significantly better than that of the best baseline, i.e., NeuralHawkes.

\section{UNIPoint with Alternative Transfer Function}
\label{app:unipoint_alttf}

Alternatively to the softplus transfer function we use throughout the paper, we also consider the following 1-transfer function
\begin{equation*}
    \tf(x) = \max(\sigma(x), x),
\end{equation*}
where \( \sigma(x) \) is the sigmoid activation function.

We report the holdout log-likelihood in Table~\ref{tab:llscores_alttf}. The change in performance from replacing the softplus function is not consistent across datasets. ExpSum and SigSum UNIPoint models has an increase in performance across all datasets, with ExpSum being significantly better in MOOC than its softplus counterpart. Meanwhile, the CosSum UNIPoint model gets worse across all datasets --- with remaining models having a mix of better and worse log-likelihood scores depending on the dataset evaluated over. This indicates that choice of transfer function is sensitive to the basis functions used and the dataset chosen.

\begin{table*}[t]
    {\fontsize{9.0pt}{10.0pt} \selectfont
    \centering
    \begin{tabular}{|l|lll|lll|}
        \hline
        \multicolumn{1}{|c}{Dataset} & \multicolumn{3}{c|}{Synthetic} & \multicolumn{3}{c|}{Real World} \\
        \cline{1-1}
        \multicolumn{1}{|l|}{TransformerHawkes} & \multicolumn{1}{c}{SelfCorrecting} & \multicolumn{1}{c}{ExpHawkes} & \multicolumn{1}{c}{DecayingSine} & \multicolumn{1}{|c}{MOOC} & \multicolumn{1}{c}{Reddit} & \multicolumn{1}{c|}{StackOverflow} \\
        \cline{2-7}
        Small & \({-0.847} \pm .002\) & \(-0.097 \pm .037\) & \(-0.505 \pm .019\) & \(3.636 \pm .075\) & \({0.627} \pm .032\) & \(-0.509 \pm .018\) \\
        Large & \(-0.926 \pm .001\) & \({0.168} \pm .036\) & \({-0.434} \pm .018\) & \({4.095} \pm .060\) & \(-1.353 \pm .097\) & \({-0.320} \pm .019\) \\
        \hline
    \end{tabular}
    }
    \caption{Averaged log-likelihood scores with corresponding 95\% confidence intervals for TransformerHawkes.}
    \label{tab:th}
\end{table*}

\begin{table*}[ht!]
    \centering
    {\fontsize{9.0pt}{10.0pt} \selectfont
    \begin{tabular}{l|ccccccccc}
        \toprule
        \# MC Points & 1 & 2 & 4 & 8 & 16 & 32 & 64 & 128 & 256 \\
        \midrule
        \multirow{2}{*}{LL \( \Delta \)}
        & \( 0.005 \)
        & \( 0.003 \)
        & \( 0.002 \)
        & \( 0.001 \)
        & \( 0.001 \)
        & \( 0.001 \)
        & \( 0.001 \)
        & \( 0.000 \)
        & \( - \) \\
        & \( \pm 0.019 \)
        & \( \pm 0.009 \)
        & \( \pm 0.011 \)
        & \( \pm 0.005 \)
        & \( \pm 0.004 \)
        & \( \pm 0.004 \)
        & \( \pm 0.002 \)
        & \( \pm 0.002 \)
        & \( - \) \\
        \bottomrule
    \end{tabular}
    }
    \caption{MOOC log-likelihood differences with corresponding standard deviation for ReLUSum. Values correspond to the absolute difference with the 256 MC sample approximation. }
    \label{tab:mc}
\end{table*}
\begin{figure*}[ht!]
    \centering
    \includegraphics[width=0.8\textwidth]{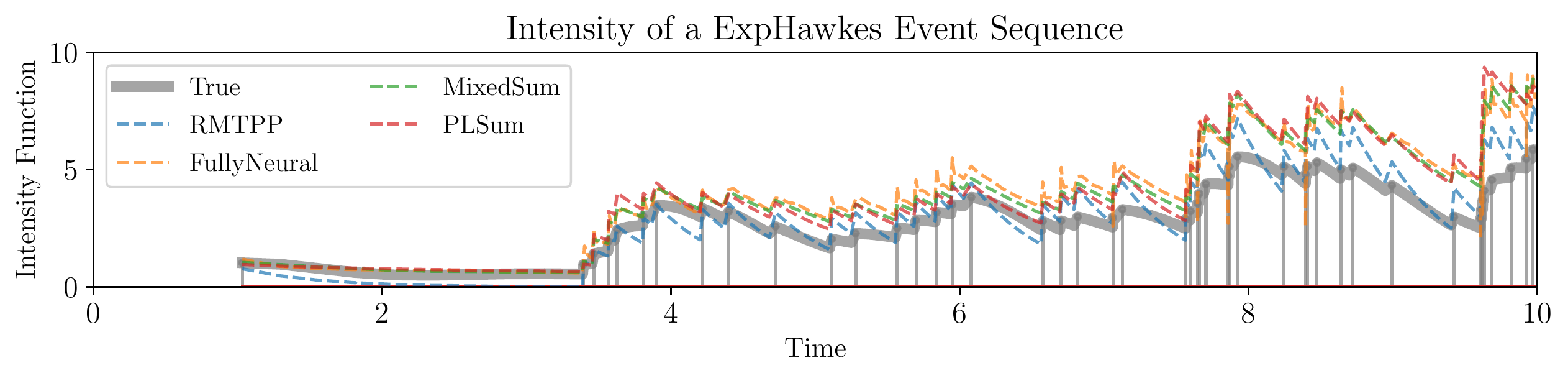}
    \caption{Intensity function of RMTPP, FullyNeural, PLSum, and MixedSum for a single ExpHawkes event sequence.}
    \label{fig:event_exphawkes_intensity}
\end{figure*}
\begin{figure*}[ht!]
    \centering
    \includegraphics[width=0.8\textwidth]{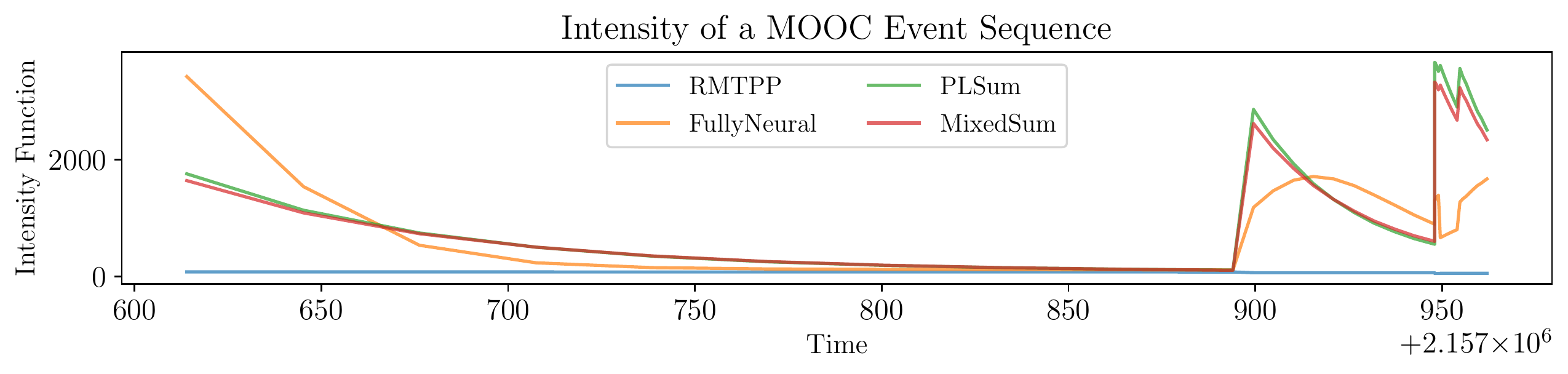}
    \caption{Intensity function of RMTPP, FullyNeural, PLSum, and MixedSum for a single MOOC event sequence.}
    \label{fig:event_mooc_intensity}
\end{figure*}

\begin{table*}[ht]
    {\fontsize{9.0pt}{10.0pt} \selectfont
    \centering
    \begin{tabular}{|cl|llllll|}
        \hline
        \multicolumn{2}{|c}{Dataset} & \multicolumn{6}{c|}{Real World} \\
        \cline{1-2}
        \multicolumn{2}{|l|}{Models} & \multicolumn{2}{|c}{MOOC} & \multicolumn{2}{c}{Reddit} & \multicolumn{2}{c|}{StackOverflow} \\
        \cline{3-8}
        \multicolumn{1}{|l|}{\multirow{3}{*}{\rotatebox[origin=c]{90}{UNIP.}}} & PLSum    & \(\mathbf{4.939 \pm .085}^{\ddagger} \) & \(\rightarrow \color{red}{4.934 \pm .086} \) & \(0.162 \pm .046\) & \(\rightarrow \color{green}{0.212 \pm .048}\)                      & \(-0.814 \pm .023\) & \( \rightarrow \color{green}{-0.791 \pm .023}\) \\
        \multicolumn{1}{|l|}{}                                                    & ReLUSum  & \(4.676 \pm .075\) & \( \rightarrow \color{red}{4.651 \pm .074}\) & \(\mathbf{0.221 \pm .046}^{\ddagger}\) & \(\rightarrow \color{green}{0.241 \pm .047}\) & \(-0.810 \pm .023\) & \( \rightarrow \color{green}{-0.797 \pm .023}\) \\
        \multicolumn{1}{|l|}{}                                                    & MixedSum & \(4.928 \pm .085\) & \( \rightarrow \color{red}{4.918 \pm .083}\)                      & \(0.201 \pm .047\) & \( \rightarrow \color{green}{0.238 \pm .048}\)                      & \(-0.804 \pm .023^{\ddagger}\) & \( \rightarrow \color{green}{-0.792 \pm .023}\) \\
        \hline
    \end{tabular}
    }
    \caption{UNIPoint with LSTM: Averaged log-likelihood scores with corresponding 95\% confidence intervals. A {higher score is better}.}
    \label{tab:llscores_lstm}
\end{table*}

\begin{table*}[ht]
    {\fontsize{9.0pt}{10.0pt} \selectfont
    \centering
    \begin{tabular}{|cl|llllll|}
        \hline
        \multicolumn{2}{|c}{Dataset} & \multicolumn{6}{c|}{Real World} \\
        \cline{1-2}
        \multicolumn{2}{|l|}{Models} & \multicolumn{2}{|c}{MOOC} & \multicolumn{2}{c}{Reddit} & \multicolumn{2}{c|}{StackOverflow} \\
        \cline{3-8}
        \multicolumn{1}{|l|}{\multirow{5}{*}{\rotatebox[origin=c]{90}{UNIPoint}}} & ExpSum   & \(3.114 \pm .125\)                      & \(\rightarrow {\color{green}3.720 \pm .103}\) & \(0.151 \pm .045\)                      & \(\rightarrow \color{green}{0.156 \pm .045}\) & \(-0.812 \pm .023\) & \( \rightarrow \color{green}{-0.813 \pm .023} \) \\
        \multicolumn{1}{|l|}{}                                                    & PLSum    & \(\mathbf{4.939 \pm .085}^{\ddagger} \) & \(\rightarrow \color{red}{4.937 \pm .085} \) & \(0.162 \pm .046\) & \(\rightarrow \color{green}{0.187 \pm .047}\)                      & \(-0.814 \pm .023\) & \( \rightarrow \color{red}{-0.802 \pm .023}\) \\
        \multicolumn{1}{|l|}{}                                                    & ReLUSum  & \(4.676 \pm .075\) & \( \rightarrow \color{red}{4.672 \pm .075}\) & \(\mathbf{0.221 \pm .046}^{\ddagger}\) & \( \rightarrow \color{green}{0.222 \pm .049}\) & \(-0.810 \pm .023\) & \( \rightarrow \color{green}{-0.806 \pm .023}\) \\
        \multicolumn{1}{|l|}{}                                                    & CosSum   & \(4.471 \pm .075\)                      & \( \rightarrow \color{red}{3.353 \pm .153} \) & \(0.139 \pm .044\)                      & \( \rightarrow \color{red}{0.151 \pm .045} \) & \(-0.814 \pm .023\) & \( \rightarrow \color{red}{-0.814 \pm .022} \) \\
        \multicolumn{1}{|l|}{}                                                    & SigSum   & \(4.346 \pm .076\)                      & \( \rightarrow \color{green}{4.373 \pm .077} \) & \(0.170 \pm .045\)                      & \( \rightarrow \color{green}{0.191 \pm .045} \) & \(-0.814 \pm .023\) & \( \rightarrow \color{green}{-0.813 \pm .023} \) \\
        \multicolumn{1}{|l|}{}                                                    & MixedSum & \(4.928 \pm .085\) & \( \rightarrow \color{green}{4.936 \pm .085}\)                      & \(0.201 \pm .047\) & \( \rightarrow \color{green}{0.220 \pm .048}\)                      & \(-0.804 \pm .023^{\ddagger}\) & \( \rightarrow \color{red}{-0.805 \pm .023}\) \\
        \hline
    \end{tabular}
    }
    \caption{UNIPoint with transfer function \( \max(\sigma(x), x) \). Averaged log-likelihood scores with corresponding 95\% confidence intervals. A {higher score is better}.}
    \label{tab:llscores_alttf}
\end{table*}

\end{document}